\documentclass[accepted]{uai2024_old} 
                        

\usepackage[american]{babel}

\usepackage{natbib} 
    \bibliographystyle{plainnat}
    
\usepackage{mathtools} 
\usepackage{booktabs} 
\usepackage{tikz} 
\usepackage{pifont}
\usepackage{amsthm}

\usepackage{algorithm}
\usepackage{algpseudocode}
\usepackage{graphicx,subfigure}
\usepackage{multirow}


\newtheorem{theorem}{Theorem}
\newtheorem{lemma}{Lemma}

\title{
Mitigating Overconfidence in Out-of-Distribution Detection by Capturing Extreme Activations
}

%
%
\author[1]{\href{mailto:m.azizmalayeri@amsterdamumc.nl
%?Subject=UAI 2024 paper
}{Mohammad Azizmalayeri}{}}
\author[1]{\href{mailto:a.abu-hanna@amsterdamumc.nl}{Ameen Abu-Hanna}}
\author[1,2,3]{\href{mailto:g.cina@amsterdamumc.nl}{Giovanni Cin\`a}}
\affil[1]{%
        Department of Medical Informatics, Amsterdam Public Health Research Institute, Amsterdam UMC, University of Amsterdam, The Netherlands
}
\affil[2]{%
Institute of Logic, Language and Computation, University of Amsterdam, The Netherlands
}
\affil[3]{%
Pacmed, Amsterdam, The Netherlands
  }
  
  \begin{document}
\maketitle

\begin{abstract}
Detecting out-of-distribution (OOD) instances is crucial for the reliable deployment of machine learning models in real-world scenarios.
OOD inputs are commonly expected to cause a more uncertain prediction in the primary task; however, there are OOD cases for which the model returns a highly confident prediction. 
This phenomenon, denoted as "overconfidence", presents a challenge to OOD detection. 
Specifically, theoretical evidence indicates that overconfidence is an intrinsic property of certain neural network architectures, leading to poor OOD detection.
In this work, we address this issue by measuring extreme activation values in the penultimate layer of neural networks and then leverage this proxy of overconfidence to improve on several OOD detection baselines. 
We test our method on a wide array of experiments spanning synthetic data and real-world data, tabular and image datasets, multiple architectures such as ResNet and Transformer, different training loss functions,  and include the scenarios examined in previous theoretical work.
Compared to the baselines, our method often grants substantial improvements, with double-digit increases in OOD detection AUC, and it does not damage performance in any scenario.

\end{abstract}




\begin{figure*}[t]
\centering
\includegraphics[width=\textwidth]{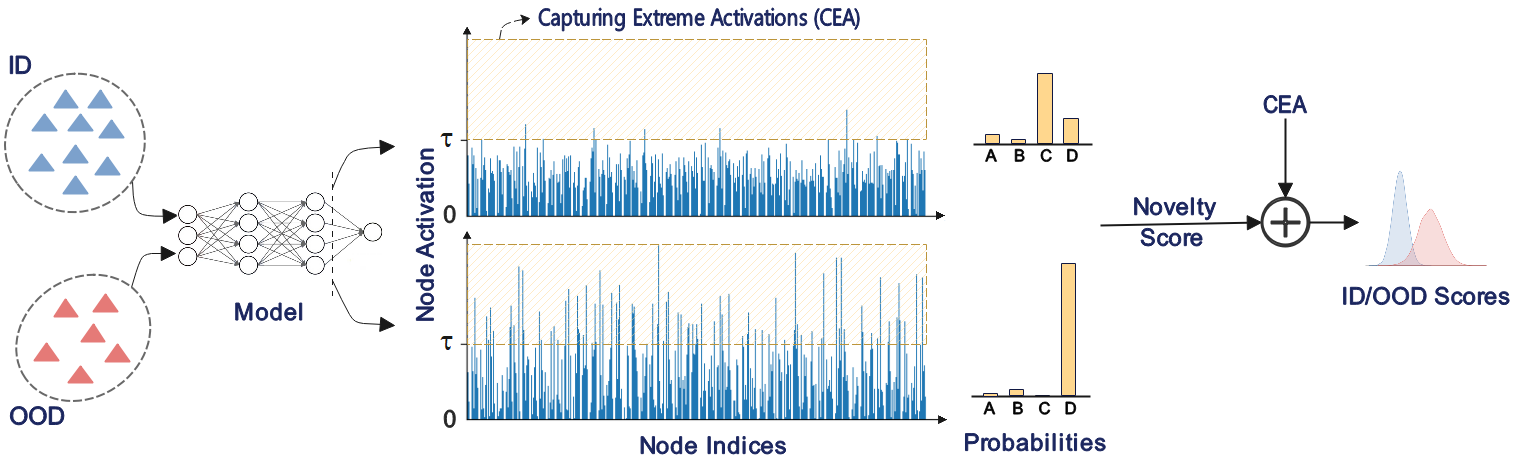}
\caption{Visual representation of the proposed method. We measure the $\ell_2$-norm of extreme activation values larger than the threshold $\tau$ (CEA) as an indicator of overconfidence caused by OOD samples and add it to the original novelty score computed based on the probabilities and activation values to generate the final novelty scores.}
\label{fig:fig1}
\end{figure*}

\section{Introduction}\label{sec:intro}
Post-deployment, neural networks may encounter out-of-distribution (OOD) samples coming from a distribution different than that of the training set. This may be due to reasons such as data shift, variations in data collection protocol, and input noise, among others \citep{bandi2018detection, koh2021wilds, zadorozhny2022out}. The predictions on such samples can be unreliable, which causes major concerns for deployment in high-stakes applications. A solution to this problem is OOD detection, which is intended to identify OOD inputs in real time before serving any prediction \citep{yang2021generalized, zimmerer2022mood}.

A common assumption in OOD detection is that machine learning (ML) models are more uncertain about OOD inputs compared to in-distribution (ID) data. This rationale underpins different metrics to identify OOD inputs such as maximum softmax probability (MSP) or entropy \citep{hendrycks2017a}, which in turn relate to different ways in which one can measure uncertainty. However, for several metrics of uncertainty, it has been demonstrated that ML models can return overconfident predictions on some kinds of OOD inputs, e.g., abnormally high softmax confidences \citep{nguyen2015deep}. This can drastically reduce OOD detection performance.

This phenomenon has been theoretically investigated for feed-forward models with ReLU activation function in the studies by \cite{hein2019relu,ulmer2021know}. For OOD instances generated from ID data by scaling a single variable, they prove that the output probability vector can converge to a one-hot vector. As a result, a model employing uncertainty measures like MSP or entropy would be highly overconfident in classifying such  OOD cases as ID.

In this work, we address the overconfidence problem in OOD detection methods. For this purpose, we propose to adjust the novelty score—the score assigned to each input to classify it as OOD or ID—by adding a second term responsible for capturing overconfidence.
Inspired by the observation that OOD inputs cause outsized activation values in the neural networks \citep{sun2021react}, we suggest 
defining this term by capturing extreme activation values (CEA). In practice, this involves taking the $\ell_2$-norm of extreme activation values (defined as surpassing a specified threshold) at the penultimate layer of models. If the threshold is chosen appropriately over a validation set, ID activations remain below the cutoff, and the term only captures overconfidence caused by OOD samples. 
Our method is displayed in Fig. \ref{fig:fig1}. 

To assess the effectiveness of the proposed method, we adopt the experimental settings outlined in \cite{ulmer2021know}, as the authors define the context stated earlier involving piece-wise linear models and synthetic OOD, where the impact of overconfidence on OOD detection becomes evident.
In addition,
we experiment with alternative architectures and OOD data such as tabular ResNet and Transformer and real-world OOD sets, where overconfidence may not occur. 
Our experiments also include models trained with LogitNorm \citep{wei2022mitigating}, a custom loss mitigating overconfidence. Furthermore, we conduct experiments with image data to evaluate how the results extend to other modalities.

Results demonstrate that many baseline OOD detection methods can benefit from CEA as it remarkably enhances the OOD detection performance of several baselines across different settings. For example, averaged results over 5 different tabular datasets indicate a $45.6\%$ improvement in the AUC of detecting synthesized OODs by MSP. Moreover, MSP performance on average increases by $41.1\%$  in the experiments with a real-world tabular OOD set. Our findings also shed light on the different factors influencing overconfidence such as network architecture and ID data heterogeneity.
The advantage of our proposal is that it can be incorporated into any method without requiring any change to the original technique or adding much computational overhead. Consequently, this method can potentially be applied to a variety of settings, improving the reliability of OOD detection methods across the board. All experiments are fully reproducible and the code is provided open access\footnote{\url{https://github.com/mazizmalayeri/CEA}.}.

\section{Preliminaries and Related Work}

\textbf{OOD detection:} For identifying OOD instances, we require a function $f$ assigning larger novelty scores to OOD inputs compared to ID ones and a threshold $\beta$ such that:
\begin{equation}\label{Eq:OOD_Detection}
    G(x, f, \beta) = \begin{cases}
      \text{OOD} \quad\quad f(x)\geq\beta\\
      \text{ID} \quad\quad\quad f(x)<\beta
    \end{cases}.
\end{equation}
Common choices for $f$ are methods that train a new model to estimate the distribution of ID data such as approaches based on an auto-encoder \citep{zhou2022rethinking}, and post-hoc detection methods, which are elaborated on below. In this study, the latter are of particular interest since they suffer from the impact of overconfidence in the generated novelty score.

\textbf{Post-hoc OOD detection:} 
Assuming that a model is trained for a certain task (which could be anything, from sentiment classification to mortality prediction), post-hoc methods can be employed to identify OOD inputs without retraining a new model, which makes them an appealing choice in many applications \citep{yang2021generalized}.
The novelty score is often generated based on the class probabilities or the internal representations of the pre-trained neural network. For example, EBO \citep{liu2020energy} utilizes an energy score instead of a softmax score since it aligns with the probability density of inputs and suffers less from overconfidence, or MDS \citep{lee2018simple} measures the distance of each input from class-conditional Gaussian distributions in the feature space.
More examples can be found in Appendix \ref{apd:baselines}.

\textbf{Overconfidence in OOD detection:}
For the remainder of this paper, we take the maximum softmax probability of the predicted classes as the measure of confidence of the model (we employ certainty and confidence as synonyms).
With \textit{overconfidence} we refer to the phenomenon of having a level of confidence in the predicted class that increases as we move away from ID data in the feature space. This in turn engenders a decrease in OOD detection AUC as we transition away from ID data, contrary to what we would want.
In addition to the softmax scores, this phenomenon can extend to the intermediate layers as demonstrated in \cite{sun2021react}: activation values in the internal layers of a neural network have a different pattern for OOD data. This can reduce the performance of post-hoc detection methods. Among the methods provided for OOD detection, ReAct \citep{sun2021react} and LogitNorm \citep{wei2022mitigating} are specifically designed to mitigate this problem. 

ReAct addresses this issue by capping the activation values in the intermediate layers of neural networks at an upper limit, thereby the overconfident values will not affect the final prediction and novelty score.  Despite the advantages of this method, we argue that it may lose useful information. Accordingly, in contrast to ReAct, we suggest retaining those values when generating the novelty score, but adjusting the novelty score based on the $\ell_2$-norm of extreme activation values (CEA).

LogitNorm is not an OOD detection method, but a loss function designed to alleviate the overconfidence of neural networks. For this purpose, motivated by the insight that increased logit norm during training with softmax cross-entropy loss induces overconfidence, the authors train the model by enforcing a constant norm on the logit vector. Note that this method requires training a new model with a constrained loss that may impact the optimization of the model for the original task.

\textbf{Theoretical vulnerability of OOD detection:} The study by \citet{ulmer2021know} gives a theoretical explanation of why OOD detection fails under overconfidence. To achieve this, they utilize a known result that feed-forward neural networks with piece-wise linear activation functions partition the input space into polytopes \citep{arora2018understanding}. They then use the fact that these networks are component-wise strictly monotonic on each of their polytopes \citep{croce2019randomized, hein2019relu}. They also prove that if we scale a single variable in an input with a factor $\alpha$, there exists a value $\delta$ such that $\forall\ \alpha >\delta$, the output always lies in a specific polytope.

Keeping in mind that we stay within a single polytope for any large $\alpha$ and acknowledging the monotonic nature of the polytopes, it is proved under certain conditions that if we scale a variable from the input by $\alpha\rightarrow\infty$, the softmax output of the defined neural network converges to a one-hot vector. Using this finding, in their Theorem 1, they conclude that OOD detection fails if we measure uncertainty with metrics like MSP and entropy,  as these methods assign smaller novelty scores to such OOD instances compared to ID ones. Furthermore, it has been empirically observed that this phenomenon occurs at a limited $\alpha$ as well \citep{azizmalayeri2023unmasking}. 

\section{Method}\label{sec:method}
In this section, we introduce CEA, a method that addresses the problem of overconfidence by modifying the novelty scores based on the outsized activation values. For this purpose, we describe below how the confidence level can be integrated into the OOD detection setup. 

\subsection{Considering Overconfidence in Novelty Score}

In order to deal with overconfidence in OOD detection, we suggest directly taking it into account as part of the novelty score generation. 
We propose adding a new term to the novelty score which is non-zero only when an OOD input causes overly confident prediction; otherwise, the new term remains close to zero and the original novelty score would be retained.
In summary, we propose to change Eq. \ref{Eq:OOD_Detection} to:
\begin{equation}
    G(x, f, g, \beta) = \begin{cases}
      \text{OOD} \quad (f(x)+\lambda\ g(x))\geq\beta\\
      \text{ID} \quad\quad (f(x)+\lambda\ g(x))<\beta
    \end{cases},
\end{equation}
where $\beta$ is a threshold for classifying a sample as OOD, $f(x)$ returns the novelty score,  $g(x)$ is the new term responsible for indicating overconfidence, and $\lambda$ controls the tradeoff between $f$ and $g$. The function $g(x)$ should have the following characteristics:
\begin{itemize}[itemsep=0pt,parsep=0pt, topsep=0pt, itemindent=0pt, leftmargin=10pt]
\item The value returned by $g(x)$ for ID data should be smaller or equal to the value returned by $g(x)$ for OOD data.
\item The value returned by $g(x)$ should monotonically increase as the overconfidence level rises, e.g., when amplifying the scaling factor $\alpha$ for synthesizing the OOD instances.
\end{itemize}
The first condition guarantees that the addition of $g(x)$ will not adversely impact the performance of the original novelty score $f(x)$ for OOD detection, and the second one is directed towards the primary objective of introducing g(x), namely highlighting the existence of overconfidence.
Note that $g(x)$ alone may not be sufficient for OOD detection as OOD instances without overconfidence will not be spotted. One may also think of using $g(x)$ as a trigger for $f(x)$, meaning that the latter is used only when the former does not trigger. This approach however requires an additional hyperparameter to decide when $g$ would raise a flag.

\subsection{Overconfidence Measure}\label{sec:overconfidence_measure}

In this section, we present a choice for $g(x)$ that meets the specified conditions and can be applied to any architecture. It has been observed that OOD data can lead to activation patterns in neural networks that are significantly different from ID data, i.e., activation units with extremely large values, which results in overconfident predictions \citep{sun2021react}. We demonstrate in the subsequent theorem that one kind of overconfident behavior on the side of the model entails the presence of extreme activations in the penultimate layer.
\begin{theorem}
Let $x\in R^D$ and suppose $\alpha$ is a scaling vector.  Now $x'= \alpha \odot x$ can be considered as an OOD example if $\alpha$ is large enough. Let $h_\theta$ be any neural network whose last layer is linear, generating an overconfident prediction for class $c$ on $x'$ as:
\begin{equation}
    \lim_{{\alpha_d \to \infty}} \sigma(h_\theta (x'))_c = 1,
\end{equation}
where $\sigma$  is the Softmax function. Then, we infer that there exists at least a dimension in which the output of the penultimate layer goes to infinity in the limit.
\end{theorem}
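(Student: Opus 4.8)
\emph{Proof sketch.} First I would fix notation by splitting off the last layer: write $h_\theta(x') = W\phi_\theta(x') + b$, where $\phi_\theta : R^D \to R^m$ is the map computed up to and including the penultimate layer ($m$ its fixed width), $W$ has rows $W_1^\top,\dots,W_K^\top$ over the $K$ classes, and $b\in R^K$; set the logits $z(\alpha) := h_\theta(\alpha\odot x) = W\phi_\theta(\alpha\odot x)+b$. The first real step is to turn the softmax hypothesis into a statement about logits: since the softmax runs over finitely many classes, $\sigma(z)_c = \bigl(1+\sum_{j\neq c}e^{z_j-z_c}\bigr)^{-1}\to 1$ is equivalent to $e^{z_j-z_c}\to 0$ for every $j\neq c$, i.e. to $z_c(\alpha)-z_j(\alpha)\to\infty$ for every $j\neq c$ as $\alpha_d\to\infty$. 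I only need this for one fixed index $j\neq c$.

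Next I would show that the penultimate representation blows up in norm, $\|\phi_\theta(\alpha\odot x)\|\to\infty$. For the fixed $j$ above,
\begin{equation}
    z_c(\alpha)-z_j(\alpha) = (W_c-W_j)^\top\phi_\theta(\alpha\odot x) + (b_c-b_j),
\end{equation}
so by Cauchy--Schwarz $z_c(\alpha)-z_j(\alpha)\le\|W_c-W_j\|\,\|\phi_\theta(\alpha\odot x)\| + |b_c-b_j|$, with $W,b$ fixed parameters independent of $\alpha$. If $\|\phi_\theta(\alpha\odot x)\|$ failed to diverge there would be a sequence $\alpha_d^{(n)}\to\infty$ along which it stays bounded, making the right-hand side bounded along that sequence and contradicting $z_c-z_j\to\infty$. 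Hence $\|\phi_\theta(\alpha\odot x)\|\to\infty$.

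Finally, from $\|\phi_\theta(\alpha\odot x)\|_2\le\sqrt{m}\,\max_i|\phi_\theta(\alpha\odot x)_i|$ and $m$ finite I get $\max_i|\phi_\theta(\alpha\odot x)_i|\to\infty$, so at least one coordinate $i^\ast$ is unbounded in absolute value as $\alpha_d\to\infty$ --- which is the asserted penultimate-layer dimension going to infinity. If that layer ends in a ReLU the divergence is toward $+\infty$, and if $\phi_\theta$ is monotone in $\alpha_d$ on the region reached for large $\alpha_d$ (as for piecewise-linear networks, per the polytope discussion above) one genuine coordinate diverges rather than merely being unbounded.

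I expect the only delicate point to be this last step: "$\max_i|\phi_i|\to\infty$" does not by itself force a single fixed coordinate to diverge along the whole net $\alpha_d\to\infty$, since the argmax could in principle oscillate between coordinates, so the honest conclusion is that \emph{some} coordinate is unbounded; passing to a subsequence, or exploiting the local affine structure of $\phi_\theta$ in $\alpha_d$, upgrades this. Everything else --- the softmax-limit equivalence and the Cauchy--Schwarz estimate --- is routine, and it is worth noting that the argument uses nothing about the architecture except that the head is linear and the class count is finite.
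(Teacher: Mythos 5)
Your proof is correct and reaches the same conclusion, but by a genuinely different route from the paper's. The paper first proves a lemma asserting that the softmax limit forces \emph{some logit itself} to diverge, $\exists\, c'$ with $h_\theta(x')_{c'}\to\infty$, and then analyzes the linear form $\sum_k (w_R)_{c',k}(x'_{R-1})_k$ term by term to extract a diverging penultimate coordinate. You instead translate the softmax hypothesis into divergence of logit \emph{differences} $z_c-z_j\to\infty$, bound these via Cauchy--Schwarz by $\|W_c-W_j\|\,\|\phi\|$, and conclude $\|\phi_\theta(\alpha\odot x)\|\to\infty$, hence an unbounded coordinate. Your route is arguably more robust: the paper's Lemma~1, read literally with $+\infty$, fails in cases where the winning logit stays bounded while the losing logits go to $-\infty$ (its own proof only delivers divergence to $\pm\infty$, and the theorem then silently uses the $+\infty$ version), whereas logit differences always diverge under the hypothesis and your norm bound sidesteps the sign question entirely. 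The paper's route is more elementary and gives the $+\infty$ sign directly when its lemma applies. Both arguments share the final-step subtlety you candidly flag --- a finite sum (or a max over finitely many coordinates) diverging only yields a coordinate that is unbounded, possibly along a subsequence, rather than a single fixed coordinate with a genuine limit of $+\infty$; the paper glosses over this, while you correctly note that a subsequence argument or the piecewise-affine/ReLU structure is needed to upgrade it, and that ReLU nonnegativity pins the sign to $+\infty$.
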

\begin{proof}
The proof is available in Appendix \ref{apd:proof}.
\end{proof}

This finding suggests that extreme activations in the penultimate layer can be an indicator of overconfidence in the prediction. Hence, we can measure the magnitude of extreme activations, denoted as CEA, as a proxy for $g(x)$. 
To this end, we use the $\ell_2$ norm of node activation values at the penultimate layer of the neural network that are higher than a specified threshold. Accordingly, assuming that $k_\theta(x)$ is the activation vector before the classification layer generated by the prediction model $k$ with parameters $\theta$ for the input $x$, we define CEA as:
\begin{equation}
   CEA(x, k_\theta, \tau) = {\lVert max(k_\theta(x)-\tau, 0) \rVert}_2\ ,
\end{equation}
where $\tau$ is the specified threshold. We utilize $\ell_2$-norm in our method, but it can potentially be substituted with other norms as well. The pseudocode for computing CEA as a proxy of $g(x)$ and adding it to the original novelty score is provided in Algorithm \ref{alg:g(x)}. 
This selection for $g(x)$ intuitively yields larger values for the overconfident OOD inputs than other OODs as they lead to more outsized activation nodes. Also, 
with a suitable choice of $\tau$, the values returned for ID data would be comparatively smaller.
Therefore, by appropriately selecting hyperparameters, the CEA algorithm can fulfill the specified conditions. 

\begin{algorithm}[t]
\caption{Simple code for the proposed method.}\label{alg:g(x)}
\begin{algorithmic}
\State \textbf{Input:} Prediction model $k_\theta$, sample $x$, OOD detection method $f$.
\State \textbf{Parameters:} Coefficient $\lambda$, Threshold $\tau$.
\vspace{0.5\baselineskip}
\State $x_{activations}, x_{logits} = k_\theta(x)$ 
\begin{scriptsize}\Comment{Activations in penultimate layer.}\end{scriptsize}
\State $NS= f(x_{activations}, x_{logits})$ 
\begin{scriptsize}\Comment{Original novelty score ($f(x)$).}\end{scriptsize}
\State $CEA= max(x_{activation}-\tau, 0)$ 
\begin{scriptsize}\Comment{Capturing extreme activations.}\end{scriptsize}
\State $CEA={\lVert CEA\rVert}_2$
\begin{scriptsize}\Comment{$\ell_2$-norm of extreme values ($g(x)$).}\end{scriptsize}
\State $NS = NS + \lambda\ CEA$
\begin{scriptsize}\Comment{Modifying $NS$ based on $CEA$.}\end{scriptsize}
\vspace{0.5\baselineskip}
\State \textbf{Output:} Modified novelty score $NS$.
\end{algorithmic}
\end{algorithm}

\textbf{Hyper-parameter selection:} $\tau$ and $\lambda$ play an important role in the proposed method. We select threshold $\tau$ such that it remains above the feature values of ID data. Hence, $g(x)$ is close to zero for ID data, while it can capture outsized feature values in OOD instances. To tune these values, we use a validation set from ID data $\mathcal{D}_{val}$ to extract their activation values at the penultimate layer of the prediction model. $\tau$ can be set to the maximum activation value extracted from the validation data; however, in presence of outliers, such a choice might lead to a very large $\tau$ that does not let $g(x)$ capture the overconfidence even in OOD cases. Alternatively, we use the activation value at the $p$-th percentile to avoid noisy activation values. 
In our study, we set $p=99.9$ for tabular datasets and $p=99.999$ for images.  Furthermore, we also scale this value by a factor of $\rho=1.1$ to ensure that most ID feature values remain below the threshold.

The coefficient $\lambda$ is determined based on the average $f(x)$ and $g(x)$ over  $\mathcal{D}_{val}$. More specifically, $\lambda$ is computed as:
\begin{equation}
    \lambda = \gamma\ \lvert\dfrac{\sum_{x\in \mathcal{D}_{val}} f(x)}{\sum_{x\in \mathcal{D}_{val}} g(x)}\rvert\ ,
\end{equation}
where $\gamma$ lets us control the tradeoff between $f(x)$ and $g(x)$. In our study, we set $\gamma=1$. We conduct an ablation study on the parameters $p$ and $\gamma$ in the experiments.

It should be noted that we only use a validation from ID data to find the hyper-parameters. Still, we could tune $\tau$ and $\lambda$ further using a diverse set of OOD examples. This involves assessing the OOD detection performance across different ranges of these parameters. However, this requires a set of diverse OOD data which covers different kinds of OOD examples that model may face in practice, which is not available in many cases. 

\begin{figure*}[ht]
\centering
\begin{subfigure}{}
\centering
     \includegraphics[width=\linewidth]{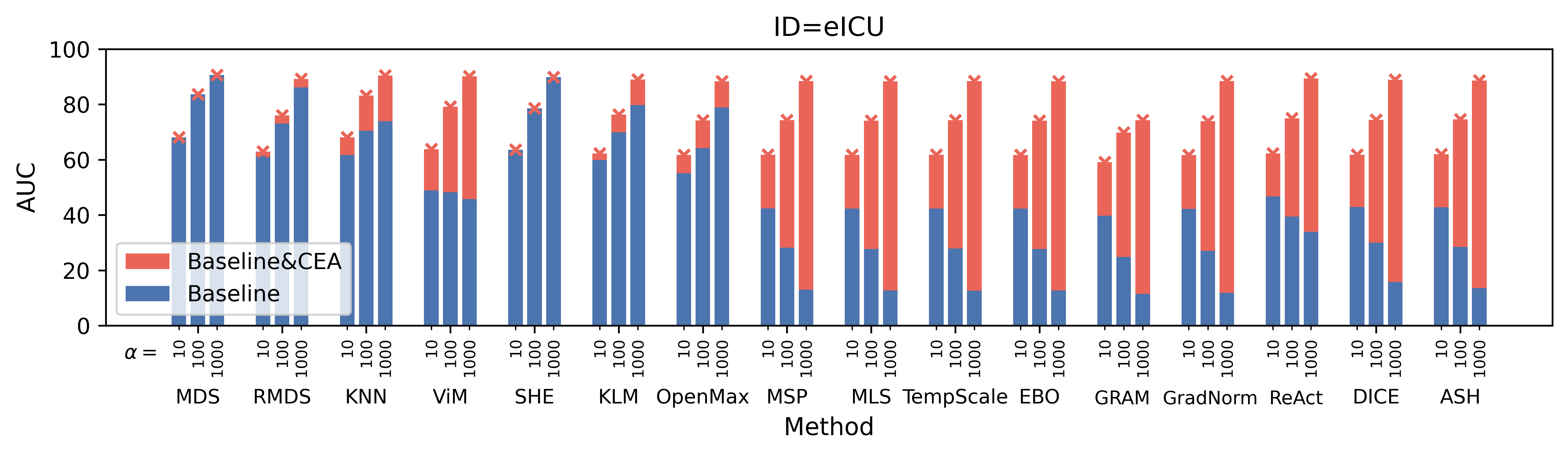}
\end{subfigure}
\\
\begin{subfigure}{}
\centering
\vskip -10pt
         \includegraphics[width=\linewidth]{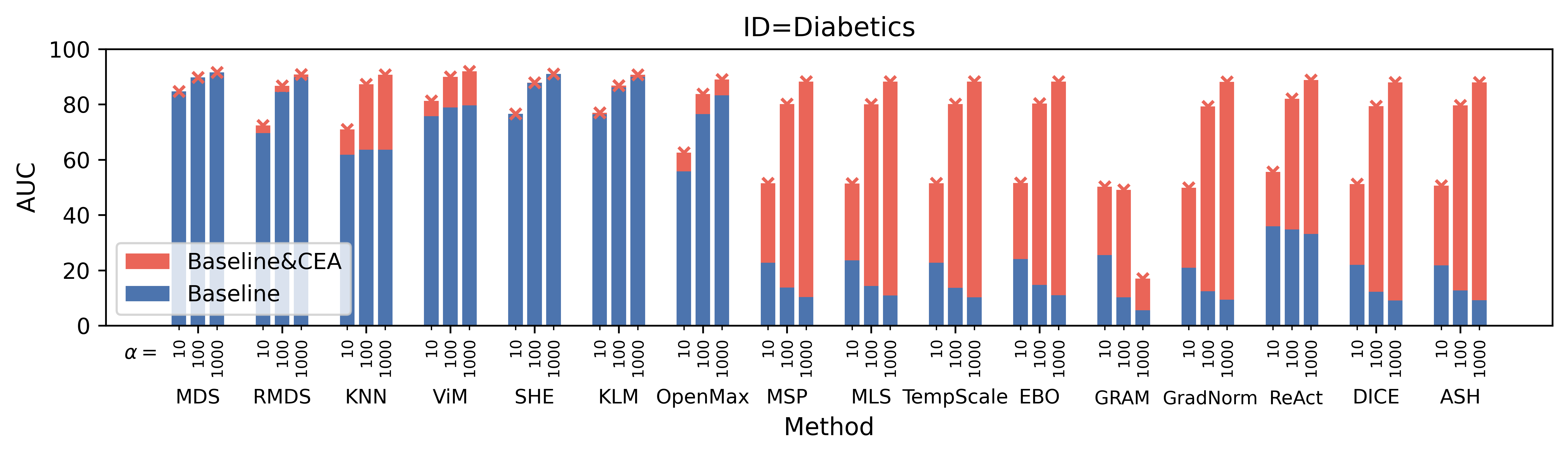}
\end{subfigure}
\vskip -15pt
\caption{OOD detection performance with and without CEA using the eICU (top) or Diabetics (bottom) datasets as ID  and synthesized OOD data obtained by scaling. The blue bars are positioned in front of the red ones and cross markers are employed to emphasize the top of the red bars. The scaling factors $\alpha$ and baselines are presented under each bar.}
    \label{fig:tabular}
\end{figure*}

\begin{figure*}[ht]
    \centering
        \includegraphics[width=\textwidth]{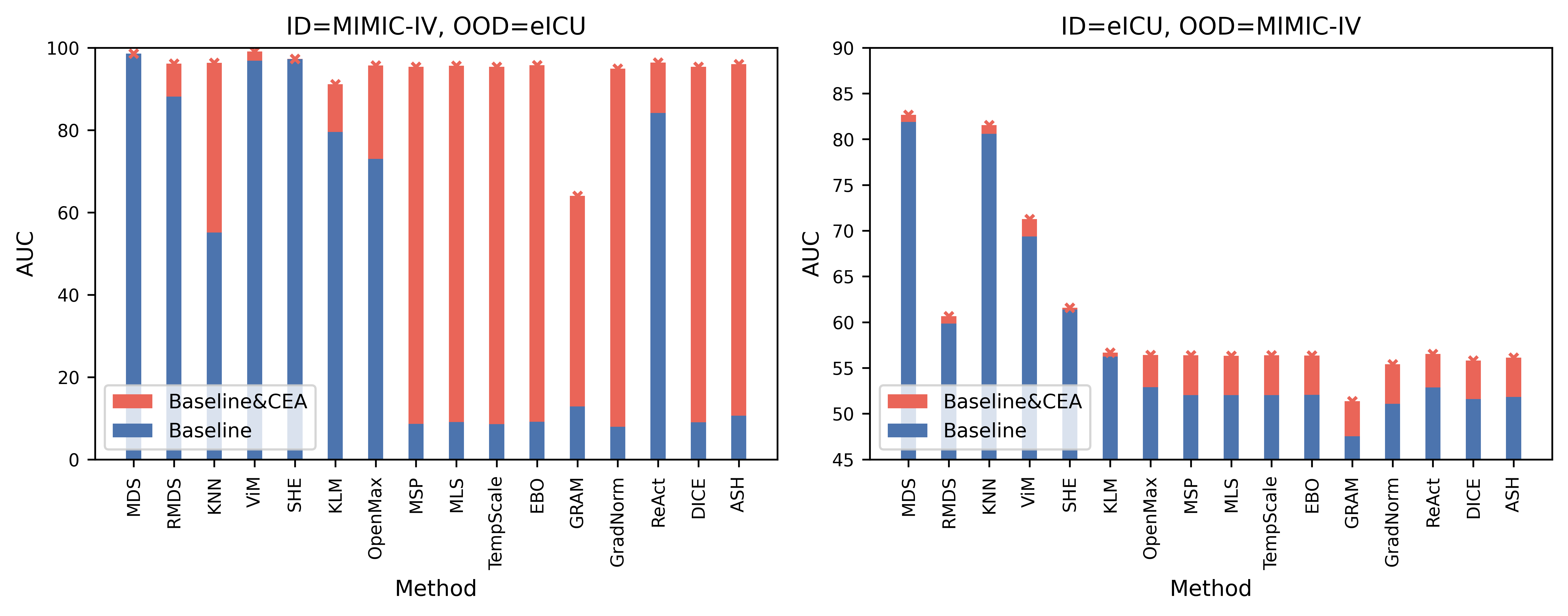}
        \caption{OOD detection performance with and without CEA using MIMIC-IV as ID and eICU as OOD (left) and the other way around (right). The blue bars are positioned in front of the red ones and cross markers are employed to emphasize the top of the red bars.}
    \label{fig:mimic_vs_eicu}
\end{figure*}

\begin{table*}[ht]
\centering
\setlength{\tabcolsep}{7.3pt}
\caption{
AUC of OOD detection with and without CEA using tabular ResNet and Transformer as the prediction model. We use eICU and Diabetics as ID and synthesize the OOD data by scaling factor $\alpha$. Superior results are emphasized in bold unless the two are equal.}
\begin{tabular}{ccccc|ccc}
\toprule
\multicolumn{1}{l}{}                            &                            & \multicolumn{3}{c|}{ResNet}                                                  & \multicolumn{3}{c}{Transformer}                                             \\ \cmidrule{3-8} 
\multicolumn{1}{l}{}                            & \multicolumn{1}{c}{}       & \multicolumn{1}{c}{$\alpha=10$} & \multicolumn{1}{c}{$\alpha=100$} & \multicolumn{1}{c|}{$\alpha=1000$} & \multicolumn{1}{c}{$\alpha=10$} & \multicolumn{1}{c}{$\alpha=100$} & \multicolumn{1}{c}{$\alpha=1000$} \\ \cmidrule{3-8} 
ID                     & Method                     & \multicolumn{6}{c}{Baseline / Baseline\hskip0.75pt\&\hskip0.75pt CEA}                                                                                                                          \\ \midrule
\multicolumn{1}{l|}{\multirow{7}{*}{eICU}}      & \multicolumn{1}{l|}{MDS}   & \textbf{77.8} / 77.6              & 91.7 / \textbf{91.8}               & 93.6 / 93.6                 & 63.3 / 63.3              & 83.6 / 83.6               & 90.7 / 90.7                \\
\multicolumn{1}{c|}{}                           & \multicolumn{1}{l|}{KNN}   & 72.0 / \textbf{74.7}              & 86.8 / \textbf{90.5}               & 89.5 / \textbf{93.3}                 & 59.9 / 59.9              & 79.5 / 79.5               & 90.1 / 90.1                \\
\multicolumn{1}{c|}{}                           & \multicolumn{1}{l|}{ViM}   & 75.4 / 75.4              & 91.4 / 91.4               & 93.7 / 93.7                 & 60.0 / 60.0              & 80.5 / 80.5               & 90.3 / 90.3                \\
\multicolumn{1}{c|}{}                           & \multicolumn{1}{l|}{MSP}   & 47.9 / \textbf{69.5}              & 30.6 / \textbf{87.3}               & 13.2 / \textbf{93.6}                 & 51.7 / \textbf{52.5}              & 56.1 / \textbf{58.3}               & 71.7 / \textbf{73.5}                \\
\multicolumn{1}{c|}{}                           & \multicolumn{1}{l|}{EBO}   & 46.4 / \textbf{69.2}              & 28.6 / \textbf{87.1}               & 13.2 / \textbf{93.6}                 & 51.6 / \textbf{52.3 }             & 56.1 / \textbf{57.9}               & 71.4 / \textbf{73.0}                \\
\multicolumn{1}{c|}{}                           & \multicolumn{1}{l|}{ReAct} & 61.6 / \textbf{70.3}              & 71.8 / \textbf{88.0}               & 76.1 /\textbf{93.6}                 & 51.9 / \textbf{52.5}              & 56.6 / \textbf{58.3 }              & 72.0 / \textbf{73.7}                \\
\multicolumn{1}{c|}{}                           & \multicolumn{1}{l|}{Gram}  & 35.4 / \textbf{55.0}              & 16.0 / \textbf{44.8 }              & 6.8 / \textbf{24.0}                  & 50.8 / 51.7\textbf{}              & 54.4 / \textbf{57.0}               & 68.3 / \textbf{69.9}                \\ \midrule
\multicolumn{1}{l|}{\multirow{7}{*}{Diabetics}} & \multicolumn{1}{l|}{MDS}   & \textbf{85.9} / 85.8              & 90.2 / \textbf{90.3 }              & 91.8 / 91.8                 & 84.3 / 84.3              & 89.5 / 89.5               & 91.3 / 91.3                \\
\multicolumn{1}{c|}{}                           & \multicolumn{1}{l|}{KNN}   & 80.0 / \textbf{83.9}              & 85.0 / \textbf{89.7}               & 87.0 / \textbf{91.9}                & 80.3 / 80.3              & 88.3 / 88.3               & 90.8 / 90.8                \\
\multicolumn{1}{c|}{}                           & \multicolumn{1}{l|}{ViM}   & 79.2 / \textbf{84.9}              & 83.9 /\textbf{90.8}               & 85.5 / \textbf{92.5}                 & 84.2 / 84.2              & 90.6 / 90.6               & 92.2 / 92.2                \\
\multicolumn{1}{c|}{}                           & \multicolumn{1}{l|}{MSP}   & 25.6 / \textbf{67.4}              & 15.5 / \textbf{86.3 }              & 10.5 / \textbf{90.3}                 & 38.4 / \textbf{39.9}              & 47.3 / \textbf{48.2}              & 61.8 / \textbf{62.7}                \\
\multicolumn{1}{c|}{}                           & \multicolumn{1}{l|}{EBO}   & 35.1 / \textbf{68.4}              & 23.0 / \textbf{86.1}               & 18.5 / \textbf{90.2}                 & 43.0 / \textbf{44.0}              & 50.8 / \textbf{51.4}               & 66.8 / \textbf{68.1}                \\
\multicolumn{1}{c|}{}                           & \multicolumn{1}{l|}{ReAct} & 44.3 / \textbf{69.9}              & 43.4 / \textbf{86.6}               & 42.8 / \textbf{90.4}                 & 45.8 / \textbf{46.9}              & 52.2 / \textbf{52.9}               & 66.6 / \textbf{68.1}                \\
\multicolumn{1}{c|}{}                           & \multicolumn{1}{l|}{Gram}  & 21.5 / \textbf{65.2}              & 11.9 / \textbf{74.8}               & 6.7 / \textbf{38.8}                  & 50.6 / \textbf{52.1}              & 54.2 / \textbf{54.7}              & 64.3 / \textbf{65.1}               \\ \bottomrule
\end{tabular}
\label{tab:artitechtures}
\end{table*}

\begin{table*}[ht]
\centering
\setlength{\tabcolsep}{7.3pt}
\caption{
AUC of OOD detection with and without CEA on the model trained with the LogitNorm loss.
We use eICU and Diabetics as ID and synthesize the OOD data by scaling factor $\alpha$. Superior results are emphasized in bold unless the two are equal.}
\begin{tabular}{ccccccccc}
\toprule
\multirow{2}{*}{ID}   & \multirow{2}{*}{$\alpha$} & MDS       & KNN       & ViM       & MSP       & EBO       & ReAct     & Gram \\ \cmidrule{3-9} 
                           &                                        & \multicolumn{7}{c}{Baseline / Baseline\hskip0.75pt\&\hskip0.75pt CEA}                                                                \\ \midrule
\multirow{3}{*}{eICU}      & 10                                     & \textbf{72.8} / 72.6 & 57.7 / \textbf{68.2} & 45.9 / \textbf{59.2} & 55.0 / \textbf{67.3} & 55.0 / \textbf{67.3} & 53.5 / \textbf{65.8} & 37.1 / \textbf{60.1}                \\
                           & 100                                    & 85.6 / \textbf{85.7} & 63.9 / \textbf{82.4} & 43.6 / \textbf{75.4} & 61.7 / \textbf{82.7} & 61.7 / \textbf{82.7} & 55.1 / \textbf{80.4} & 21.7 / \textbf{62.6}                \\
                           & 1000                                   & 90.5 / 90.5 & 66.1 / \textbf{89.2} & 42.6 / \textbf{87.2} & 64.6 / \textbf{89.9} & 64.7 / \textbf{90.0} & 54.2 / \textbf{89.2} & 11.3 / \textbf{42.7}                \\ \midrule
\multirow{3}{*}{Diabetics} & 10                                     & 84.7 / 84.7 & 82.9 / \textbf{83.4} & 84.2 / 84.2 & 35.2 / \textbf{65.0} & 35.2 / \textbf{65.0} & 20.3 / \textbf{61.0} & 23.4 / \textbf{51.6}                \\
                           & 100                                    & 89.6 / 89.6 & 88.3 / \textbf{89.1} & 89.7 / 89.7 & 32.6 / \textbf{85.7} & 32.6 / \textbf{85.7} & 11.9 / \textbf{84.2} & 12.5 / \textbf{55.0}                \\
                           & 1000                                   & 91.7 / 91.7 & 90.5 / \textbf{91.3} & 91.9 / 91.9 & 31.7 / \textbf{90.0} & 31.9 / \textbf{90.1} & 9.1 / \textbf{89.1}  & 9.4 / \textbf{29.2}    \\ \bottomrule            
\end{tabular}
\label{tab:logitnorm}
\end{table*}

\begin{table*}[ht]
\centering
\setlength{\tabcolsep}{7.3pt}
\caption{AUC of OOD detection with and without CEA in image datasets. MNIST and CIFAR-10 serve as ID, and OOD sets are synthesized by i) scaling or ii) an adversarial attack, or iii) selected from other datasets. ResNet-32 and ReLU MLP classifiers are used as the prediction model. Superior results are in bold unless the two are equal.}
\begin{tabular}{ccccc|ccc}
\toprule
\multicolumn{1}{l}{}                           &                            & \multicolumn{3}{c|}{ReLU MLP}                                                      & \multicolumn{3}{c}{ResNet-32}                                                     \\ \cmidrule{3-8} 
\multicolumn{1}{l}{}                           & \multicolumn{1}{c}{}       & \multicolumn{1}{c}{Scale} & \multicolumn{1}{c}{Attack} & \multicolumn{1}{c|}{Other} & \multicolumn{1}{c}{Scale} & \multicolumn{1}{c}{Attack} & \multicolumn{1}{c}{Other} \\ \cmidrule{3-8} 
ID                   & Method                     & \multicolumn{6}{c}{Baseline / Baseline\hskip0.75pt\&\hskip0.75pt CEA}                                                                                                                                      \\ \midrule
\multicolumn{1}{l|}{\multirow{7}{*}{MNIST}}    & \multicolumn{1}{l|}{MDS}   &  64.2 / \textbf{64.3} & \textbf{98.5} / 98.1 & 88.7 / \textbf{90.2}                  & 59.5 / 59.5                 & 99.9 / 99.9                & 99.9 / 99.9              \\
\multicolumn{1}{c|}{}                          & \multicolumn{1}{l|}{KNN}   &  62.1 / \textbf{62.4} & 73.1 / \textbf{84.6} & 97.6 / \textbf{98.2}                  & 54.6 / 54.6                 & 99.2 / \textbf{99.7}                  & 99.9 / 99.9              \\
\multicolumn{1}{c|}{}                          & \multicolumn{1}{l|}{ViM}   & 60.2 / 60.2 & 67.4 / \textbf{68.0} & 98.0 / 98.0                 & 58.3 / 58.3                 & 99.9 / 99.9                & 99.9 / 99.9              \\
\multicolumn{1}{c|}{}                          & \multicolumn{1}{l|}{MSP}   &  46.3 / \textbf{63.2} & 26.1 / \textbf{73.1} & 77.5 / \textbf{89.9}                  & 52.5 / \textbf{54.3}                 & 59.7 / \textbf{97.3 }                 & 98.3 / \textbf{98.7}                \\
\multicolumn{1}{c|}{}                          & \multicolumn{1}{l|}{EBO}   &  41.3 / \textbf{61.3} & 7.0 / \textbf{44.2}  & 74.7 / \textbf{92.3}               & 47.5 / \textbf{67.7}                 & 11.7 / \textbf{92.8}                 & 95.5 / \textbf{97.0 }               \\
\multicolumn{1}{c|}{}                          & \multicolumn{1}{l|}{ReAct} & 56.4 / \textbf{60.3} & 28.7 / \textbf{67.7}& 88.8 / \textbf{93.6}                 & 60.9 / \textbf{61.0}                 & 86.7 / \textbf{97.8}                  & 98.6 / \textbf{98.8}                \\
\multicolumn{1}{c|}{}                          & \multicolumn{1}{l|}{Gram}  & 34.1 / \textbf{48.9} & 2.8 / \textbf{12.8}  & 26.8 / \textbf{36.4}                 & 43.8 / \textbf{44.8}                 & 1.6 / \textbf{71.2}                   & 53.0 / \textbf{62.0}                \\ \midrule
\multicolumn{1}{l|}{\multirow{7}{*}{CIFAR-10}} & \multicolumn{1}{l|}{MDS}   & 97.9 / 97.9                 & 95.4 / \textbf{95.5}                  & \textbf{61.8} / 61.7                 & 99.8 / 99.8                 & 98.0 / \textbf{99.1}                  & 31.1 / 31.0                \\
\multicolumn{1}{c|}{}                          & \multicolumn{1}{l|}{KNN}   & 77.7 / \textbf{97.4}                 & 65.3 / \textbf{80.4}                  & 54.5 / \textbf{56.4}                 & 99.5 / \textbf{99.6}                 & 10.0 / \textbf{71.1}                  & 86.9 / \textbf{87.0}                \\
\multicolumn{1}{c|}{}                          & \multicolumn{1}{l|}{ViM}   & 71.2 / \textbf{95.2}                 & 43.3 / \textbf{50.5}                  & 63.5 / \textbf{64.3}                 & 70.6 / \textbf{95.6}                 & 0.0 / \textbf{58.6}                   & 90.4 / 90.4                \\
\multicolumn{1}{c|}{}                          & \multicolumn{1}{l|}{MSP}   & 14.3 / \textbf{90.0}                 & 11.0 / \textbf{39.0}                  & 57.3 / \textbf{59.1}                 & 88.0 / \textbf{99.3}                 & 0.2 / \textbf{74.4}                   & 88.4 / \textbf{88.7}                \\
\multicolumn{1}{c|}{}                          & \multicolumn{1}{l|}{EBO}   & 3.9 / \textbf{4.4 }                  & 5.9 / \textbf{6.3}                    & 50.0 / 50.0                 & 78.0 / \textbf{96.7}                 & 0.0 / \textbf{59.0}                   & 90.4 / 90.4                \\
\multicolumn{1}{c|}{}                          & \multicolumn{1}{l|}{ReAct} & 71.6 / \textbf{94.4}                 & 72.2 / \textbf{81.3}                  & 55.5 / \textbf{57.7}                 & 97.5 / \textbf{99.0 }                & 13.0 / \textbf{73.4}                  & 81.8 / \textbf{82.0}                \\
\multicolumn{1}{c|}{}                          & \multicolumn{1}{l|}{Gram}  & 22.3 / \textbf{94.1}                 & 8.2 / \textbf{25.9}                   & 61.6 / \textbf{63.1}                 & 13.8 / 13.8                 & 0.0 / \textbf{1.6}                    & 70.4 / 70.4    \\ \bottomrule           
\end{tabular}
\label{tab:images}
\end{table*}

\subsection{Experimental Setup}\label{sec:experiment_setup}

To investigate the effectiveness of our proposed method in contrast to baseline OOD detection methods, we follow the theoretical findings described earlier, where the neural network has a provably higher level of confidence for some kinds of OOD cases compared to IDs. Accordingly, we incorporate the assumptions used in Theorem 1 of \citet{ulmer2021know}
as follows:
\begin{itemize}[itemsep=0pt,parsep=0pt, topsep=0pt, itemindent=0pt, leftmargin=10pt]
    \item A ReLU classifier constituted of fully connected layers with ReLU activation function is considered as the basic architecture of the prediction model. This architecture is a piece-wise affine function that leads to overconfident predictions as discussed.

    \item OOD instances are generated by scaling a single input variable from the ID data by an $\alpha$ factor. We repeat synthesizing OOD versions 50 times with different variables and average the results over them to minimize the effect of the selected variable. If a dataset has a smaller number of variables, we use each of them once. Moreover, we use different values of $\alpha$ to see how that affects the results. We are interested in such OODs since for large $\alpha$ they lead to overconfident predictions according to the theory.
\end{itemize}

The experiments proceed as follows. First, a prediction model is trained on a specific dataset. Then, post-hoc OOD detection methods with and without our method are applied to the prediction model to examine whether they can discriminate OOD instances from IDs and the extent to which they suffer from overconfident predictions. 
AUC is used as the discrimination criterion and the results are averaged over three repetitions of experiments with different randomization seeds to increase reliability.
We investigate more complicated architectures and some other OOD groups in the experiments as well, to see how the results extend to those scenarios. In the following, we provide details about these architectures and OODs along with the datasets and OOD detection baselines used in the experiments. 

\textbf{Tabular datasets:} We consider 5 different tabular datasets consisting of eICU \citep{pollard2018eicu}, MIMIC-IV \citep{johnson2023mimic}, Diabetic Retinopathy Debrecen \citep{balint2014diabetic}, Dry Bean \citep{UCI2020bean}, and Wine Quality \citep{Paulo2009wine}. These datasets are drawn from different domains and aim at diverse prediction tasks such as mortality prediction or wine type. In addition, they include large-scale and unbalanced (e.g., eICU) datasets. Detailed information about the datasets can be found in Appendix \ref{apd:datasets}. For the first three datasets, we use $\alpha \in [10, 100, 1000]$ in synthesizing the OOD instances, while opting for $\alpha \in [2, 3, 4]$ for the remaining ones as they exhibit overconfident behavior at a smaller scale.

\textbf{Real-world tabular data as OOD:} Besides synthesized OOD groups, it is valuable to examine real-world OOD instances as well. Among the tabular datasets mentioned above, eICU and MIMIC-IV datasets share a lot of similar variables. Consequently, we can employ each of them as the OOD data for the other one. For this purpose, we only use the shared variables in these two datasets. This experiment is also interesting in that it explores the effect of data homogeneity on overconfidence: as opposed to eICU, which comprises the data of several hospitals, MIMIC-IV is more homogeneous since it sources data from a single center.

\textbf{Architectures:} In addition to the ReLU classifier, we also employ tabular ResNet and Transformer \citep{gorishniy2021revisiting} in the experiments. It has been empirically observed that tabular Transformers can mitigate the overconfidence phenomenon \citep{azizmalayeri2023unmasking}, which allows us to evaluate our method in combination with a prediction model that suffers from this issue only marginally.

\textbf{Image datasets:} In addition to tabular datasets, we also consider three widely used image datasets MNIST \citep{deng2012mnist}, CIFAR-10, CIFAR-100 \citep{krizhevsky2009learning}.
We train a ReLU classifier and a ResNet-32 model on each of these datasets. Within images, we only set $\alpha=10$ for the synthesized OODs since it is enough to show overconfidence, and also use adversarial examples generated by maximizing the cross-entropy loss of the model itself via a PGD-20 attack with $\epsilon=32/255$ as another way of generating overconfident OOD instances \citep{nguyen2015deep, madry2018towards}. Furthermore, we also report the average detection performance on real-world OOD datasets, namely Fashion MNIST \citep{xiao2017fashion}, MNIST, SVHN \citep{Netzer2011SVHN}, CIFAR-10, and CIFAR-100 (the one used as ID is excluded in the averaging).


\textbf{Baseline OOD detection methods:} A wide range of post-hoc OOD detection baselines are considered in this study following recent benchmarks in OOD detection \citep{yang2022openood, zhang2023openood, azizmalayeri2023unmasking}. These baselines include both commonly used and top-performing post-hoc OOD detection methods.
More information about these methods can be found in Appendix \ref{apd:baselines}. In addition, we also consider the same baselines applied to an MLP architecture with LogitNorm \citep{wei2022mitigating}. 

\section{Results}\label{sec:results}

\paragraph{Evaluation with synthetic OOD}

The results of the experiments on tabular datasets are displayed in Fig. \ref{fig:tabular} for eICU and Diabetics datasets, and in Appendix \ref{apd:additional_results} for the rest. Based on the results, our method gives a net positive improvement on most baselines, with two exceptions that are not affected, MDS and SHE. Furthermore, the improvements are more notable in detection methods such as MSP and EBO which rely on the probabilities to generate the novelty scores. 

In addition, it is expected that OOD data generated with a larger scaling factor are detected better as they are farther away from the ID data. However, certain baselines present a different behavior, and their performance is decreased by increasing the scaling factor. This is a sign of overconfidence in the prediction model as the detection method assigns a lower novelty score to the OOD instances farther from the ID data. As hoped, our method addresses this issue: combining the baselines with our method leads to consistently better performance for larger scaling factors.

\paragraph{Real-world tabular data as OOD}

To assess the performance on real-world tabular OOD datasets, we consider MIMIC-IV as OOD set for eICU and vice versa. Results of this experiment are illustrated in Fig. \ref{fig:mimic_vs_eicu}. Similarly to synthesized OODs, our method significantly improves the OOD detection performance across several baselines without negatively affecting any of them. 

The results also indicate that our method grants more improvement when the model is trained on MIMIC-IV as ID. This shows that the prediction model trained on MIMIC-IV suffers more from overconfidence compared to the eICU dataset (see Discussion).
Finally, it is worth noting that the addition of our methods improves all OOD detection performances above the chance threshold of 0.5 AUC (often far better) `reversing' the effect of overconfidence. 

\paragraph{Other architectures}

The architecture of the prediction model plays an important role in its overconfidence. Thus, we employ tabular ResNet and Transformer \citep{gorishniy2021revisiting} to evaluate our approach. Table \ref{tab:artitechtures} displays the results, demonstrating that the addition of CEA outperforms numerous baseline detection methods when applied to ResNet (due to space limitation, we show only some baselines and put the rest in Appendix \ref{apd:additional_results}). However, the improvements with the Transformer are marginal. This aligns with prior observations that Transformer as an architecture mitigates the effect of overconfidence \citep{azizmalayeri2023unmasking}.

It is noteworthy that the OOD detection performance of the pure baseline methods is better on average on the Transformer model as it internally addresses the overconfidence. However, results on ResNet plus CEA often get better than Transformer with the same advantage (especially for MSP, EBO, and React). Hence, while changing the architecture of the prediction model itself can be a solution to overconfidence, its capability for OOD detection is still highly dependent on the way OODs are singled out.

\paragraph{LogitNorm training}

To assess the impact of LogitNorm training, we also train prediction models with this loss instead of softmax cross-entropy loss. Results are provided in Table \ref{tab:logitnorm}. According to this table, our method still manages to improve on the models trained with this dedicated loss across different datasets and baselines. 

Comparing the results from this table and Figure \ref{fig:tabular} indicates that LogitNorm itself leads to better OOD detection as expected. Nevertheless, it does not eliminate overconfidence in the prediction model, e.g., OOD detection using LogitNorm plus MSP (or EBO, or ReAct, or Gram) on Diabetics results in worse performance than a random binary classifier.

\paragraph{Extension to images}

In this section, we evaluate the OOD detection performance within the image domain. Results for MNIST and CIFAR-10 datasets are presented in Table \ref{tab:images} and additional results for CIFAR-100 can be found in Appendix \ref{apd:additional_results}. For synthesized OOD sets, our method significantly improves the performance of many baselines regardless of the choice of prediction model and dataset. Furthermore, it is similarly effective with real-world OOD sets when applied to the ReLU MLP. Nevertheless, the improvements become marginal with the ResNet architecture, to the extent that on heterogeneous ID data such as CIFAR-100 results with and without CEA are the same for real-world OOD sets.

\paragraph{Ablation study on hyperparameters}
$\tau$ and $\lambda$ are the main hyperparameters in our method, regulated via $p$ and $\gamma$, respectively. In this section, we examine the effects of these parameters. To achieve this goal, we evaluate the OOD detection performance on the Diabetics dataset across various values of $p$ and $\gamma$ in Fig. \ref{fig:parameters}. This figure demonstrates that the proposed method can improve the OOD detection for a wide range of choices for these parameters. Nevertheless, it is advisable to fine-tune these parameters to achieve optimal results. Note that we have used the fixed set of parameters described in section \ref{sec:overconfidence_measure}. Therefore, the outcomes of our method could potentially be enhanced by identifying the optimal hyperparameters for each detection method and dataset.

\begin{figure*}[ht]
    \centering
        \includegraphics[width=0.9\textwidth]{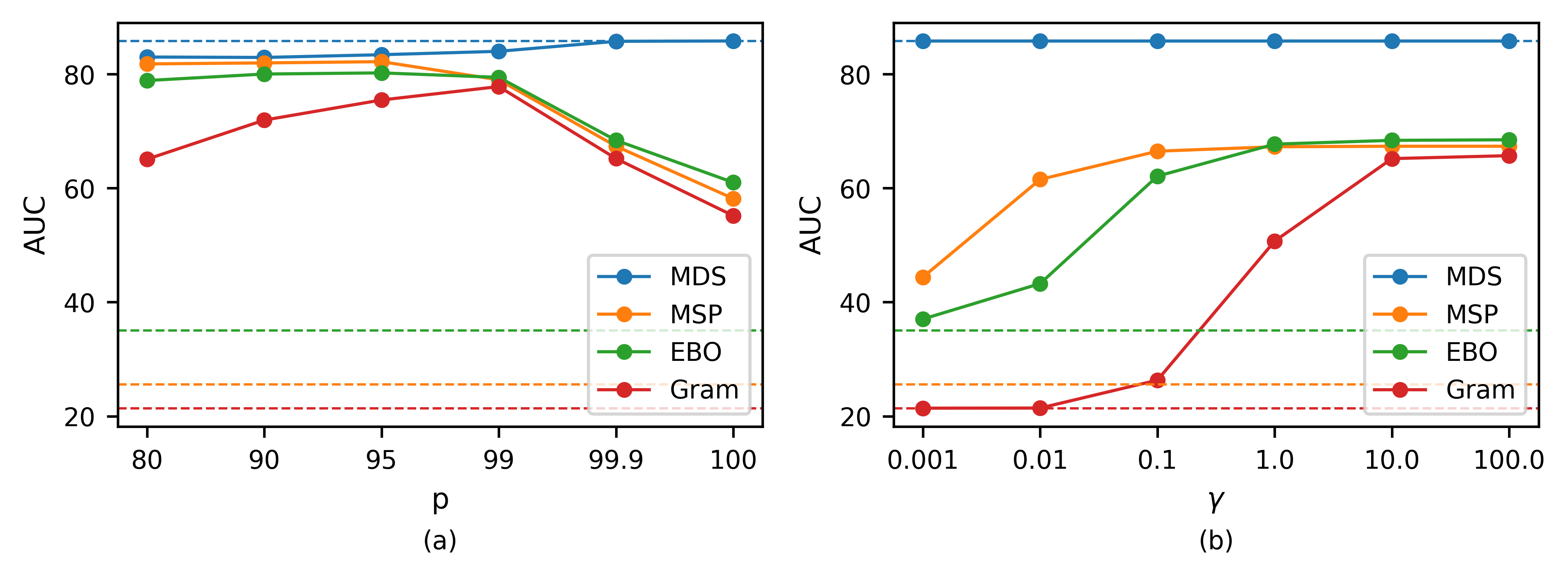}
        \caption{Impact of parameters on the performance of CEA applied on different baseline OOD detection methods within the Diabetics dataset. (a) $\gamma=10$ and $p$ is changed. (b) $p=99.9$ and $\gamma$ is changed. The dashed lines indicate the performance of OOD detection methods without CEA ($\gamma=0$).}
    \label{fig:parameters}
\end{figure*}

Based on Fig. \ref{fig:parameters}, a higher value of $p$ guarantees that CEA only positively influences the performance of baseline OOD detection methods. However, it might reduce the capability to detect overconfident OOD instances, as it raises the threshold $\tau$.

For investigating $\gamma$, we have set $p$ such that it results in a threshold that remains above node activation values of ID data. Consequently, our method only captures the overconfidence in OOD instances, and increasing $\gamma$ results in a better performance. However, we do not recommend using a large $\gamma$ if the threshold has not been set carefully. In addition, note that a small $\gamma$ can also result in neglecting the effect of CEA on the final novelty score.

\section{Discussion}\label{sec:discussion}

As a solution to the overconfidence issue in OOD detection, we proposed CEA, a way to adjust the novelty scores of the post-hoc OOD detection methods by adding a new term to the original novelty score. 
CEA is only activated when an OOD input results in an outsized activation compared to the corresponding values over the ID validation set. 

To demonstrate the effectiveness of CEA, we conducted experiments on 16 different baseline OOD detection methods across 5 different tabular datasets spanning a wide range of domains in a context where it has been theoretically verified that overconfidence hurts OOD detection. We also explored alternative settings with real-world OOD sets, other architectures like tabular ResNet and Transformer, and image datasets. There was a significant improvement in the performance of numerous baselines across these settings; however, there were also methods and settings that were not affected much, which are discussed below.


Our method enhances baseline detection methods relying on the class probabilities to generate novelty scores such as MSP, EBO, and DICE more than those that measure a distance in the feature space such as MDS, SHE, and KLM. This is because distance-based methods inherently handle overconfidence by relying on distance (measured among internal representations) instead of confidence. Consequently, CEA may not improve much the performance of these kinds of detection methods, as can be observed for MDS in the results. One may be tempted to resort to these approaches instead, but it should be noted that we cannot solely rely on distance-based detection methods as they may not perform well in general, see e.g. MDS applied on the ResNet-32 model trained on the CIFAR-10 dataset. More specifically, another baseline combined with CEA may perform better than methods like MDS. Hence, CEA allows us to replace methods like MDS with other baselines while keeping the benefits of those methods. 

Within the architectures evaluated in our study, the Transformer seems to be more robust against overconfidence. This behavior can be explained based on the theoretical understanding of the problem. The proofs provided on overconfidence assume that the model is a piece-wise affine classifier \citep{hein2019relu}. Nevertheless, the Transformer utilizes activation functions and attention mechanism \citep{vaswani2017attention} which are non-linear, violating this assumption. 
This property results in better OOD detection performance within detection methods such as MSP and EBO; still, when overconfidence is addressed by our proposed method or methods such as MDS, we see that the non-linearity of Transformer is not so beneficial for OOD detection anymore.




The results also showcase that more heterogeneous ID data reduces overconfidence. For example, models trained on eICU and CIFAR-100 are not overconfident in real-world OOD sets as much as models trained on MIMIC-IV and CIFAR-10, respectively. This may explain the observation in the OpenOOD and other benchmarks \citep{yang2022openood, azizmalayeri2023unmasking} that OOD detection performance is better in some models trained on complex datasets. We also note that model calibration may be another way to mitigate overconfidence; however, our results in Appendix \ref{apd:calibration} demonstrate that calibration improves the OOD detection AUC only marginally.

Lastly, note that the proposed method can be seamlessly incorporated as an extension to any post-hoc OOD detection method, without much computational overhead but with potentially big gains in real-time OOD detection performance. Additionally, it is compatible with other methods proposed for overconfidence such as LogitNorm and ReAct. This property makes this method suitable for many applications, especially those with a high risk of data shift and safety-critical consequences. Even though the applicability of our method may seem impaired by the need to choose hyperparameters, the ablation study demonstrates that the conclusions about CEA are robust within reasonable hyperparameter ranges. 


In summary, we believe that the proposed method can increase the reliability of OOD detection methods and benefit a wide range of domains that currently use ML models and OOD detection such as healthcare (e.g., disease recognition or mortality prediction), financial services (e.g., fraud detection), transportation (e.g., autonomous vehicles), and cybersecurity (e.g., identification of OOD network patterns).
Our study not only offers a practical solution but also provides insights that open the door to research exploring alternative solutions to overconfidence in OOD detection. On the experimental side, future work can also consider the application of CEA within alternative domains, including but not limited to time-series and text data, to enrich the understanding of the problem.
On the theoretical realm, it might be worth investigating which properties of CEA (or the term $g$ more generally) are sufficient to guarantee the absence of overconfidence in OOD detection.

\begin{acknowledgements} 
This study was part of the project \textit{Research in clinical prediction models and natural language processing with deep learning} (project number NWO-2021.024) of the research program Computing Time on National Computer Facilities. The computational resources used were financed by the Dutch Research Council (NWO).

\end{acknowledgements}

\bibliography{uai2024-template}

\newpage

\onecolumn

\title{Mitigating Overconfidence in Out-of-Distribution Detection by Capturing Extreme Activations \\(Supplementary Material)}
\maketitle
\vspace{3pt}

\appendix

\section{Baseline Post-hoc OOD Detection Methods}\label{apd:baselines}

In this section, we provide a summary of baseline OOD detection methods included in our experiments. We have selected these methods following the recent related benchmarks \citep{yang2022openood, zhang2023openood, azizmalayeri2023unmasking}, and detailed information about them can be found in the code and original studies.

\textbf{MDS} \citep{lee2018simple}: This method fits a class-conditional Gaussian distribution $\mathcal{N}(\mu_k, \Sigma)$ to the feature vector before the logits. The covariance matrix $\Sigma$ is shared between the classes, but the mean $\mu_k$ is computed separately for each class $k\in\{1, 2, ..., K\}$. The novelty score for an input x with pre-logit features $l_x$ is computed as:
\begin{equation}
    \min_{k}\ \ (h_x-\mu_k)^T\Sigma(h_x-\mu_k) .
\end{equation}

\textbf{RMDS} \citep{ren2021simple}: Motivated by the observation that MDS does not work well on near-OOD data, they suggest fitting a single distribution $\mathcal{N}(\mu, \Sigma)$ to the feature vector before the logits and normalizing the distances measured by MDS as:
\begin{equation}
    MDS(h_x)-(h_x-\mu)^T\Sigma(h_x-\mu) .
\end{equation}
They believe that this fix to MDS makes it more robust on near-OOD sets.

\textbf{KNN} \citep{sun2022out}: They suggest non-parametric nearest-neighbor distance for OOD detection. More specifically, they compute the distance to the $k_{th}$ nearest neighbor distance from the training set as the novelty score. The distance is computed based on the embedding extracted from each input.

\textbf{ViM} \citep{wang2022vim}: This study suggests that softmax probability and features should be used simultaneously to be capable of detecting different types of OOD. Accordingly, they combine a class-agnostic score from the feature space with the class-dependent scores from logits. More specifically, their method is based on the idea that some information from feature space is not carried to the logits. They recover this information from the principal subspace of features and add it to the score from logits.

\textbf{SHE} \citep{zhang2022out}: This method shares similarities with MDS regarding the intuition behind the method. SHE stores a single class-dependent pattern from the penultimate layer of the neural network over the training set. Afterward, it leverages an energy function defined in Modern Hopfield Network \citep{ramsauer2020hopfield} to measure the distance between a new input pattern and stored patterns.

\textbf{KLM} \citep{hendrycks2022scaling}: For each class of data, they average the probability vector extracted from the validation samples classified in the corresponding class by the prediction model. The novelty score for an input is then computed based on the minimum KL distance of its probability vector from the class-dependent probabilities computed earlier.

\textbf{OpenMax} \citep{bendale2016towards}: 
On a dataset with $k$ classes, they propose to change the softmax probability such that it generates a probability vector for $k+1$ classes, where the last class corresponds to the open-set class. To achieve this, they reweight the original probability vector by fitting a Weibull distribution to the class-dependent probabilities.

\textbf{MSP} \citep{hendrycks2017a}: It is a simple but effective baseline proposed for OOD detection. It is motivated by the intuition that the maximum softmax value for an OOD input should not be as large as ID data. So, it utilizes maximum softmax probability to compute to novelty score.

\textbf{MLS} \citep{hendrycks2022scaling}: As an alternative to maximum softmax probability, MLS suggest to use maximum logit. Experiments on MLS have demonstrated that it performs better than MSP in large-scale multi-class, multi-label, and
segmentation tasks.

\textbf{TempScaling} \citep{guo2017calibration}: This method calibrates the softmax temperature over a validation before applying MSP for OOD detection.

\textbf{EBO} \citep{liu2020energy}: The intuition behind EBO is that $p(y|x)$ used in methods such as MSP should be replaced with $p(x)$, which shows better whether an input $x$ comes from the training distribution. For this purpose, they propose an energy-based framework for OOD detection that computes novelty scores based on the energy score.

\textbf{GRAM} \citep{sastry2020detecting}: They characterize the intermediate representations of the neural network by GRAM matrices. OOD inputs are identified by comparing the values in the GRAM matrices to their respective range computed over the training set.

\textbf{GradNorm} \citep{huang2021importance}: The key idea in GradNorm is that the magnitude of gradient back-propagated from the KL distance between the softmax vector and a uniform probability vector would be larger for ID data than that of OOD data. This makes sense as OOD data are generally expected to yield a uniform probability vector for OOD data.

\textbf{ReAct} \citep{sun2021react}: This method rectifies activation units at the penultimate layer of the neural network at an upper limit computed over a validation set. This helps to reduce the impact of overconfidence in the generated novelty score. They suggest applying EBO after rectification, but it can be combined with other detection methods as well.

\textbf{DICE} \citep{sun2022dice}: The idea of DICE is that reliance of neural networks on unimportant weights and units can reduce the OOD detection performance. Accordingly, DICE proposes to rank weights based on a contribution measure and only use the more contributing ones in OOD detection.  A simple example of the contribution measure is averaging the output of each weight over a validation set.

\textbf{ASH} \citep{djurisic2022extremely}: This study extends the neural network sparsification idea and proposes to remove a large proportion of an input's activations and lightly adjust the rest. The change in the weights is case-specific and does not require any statistic from the training set.

\section{Dataset and Task Details}\label{apd:datasets}
In this section, we present information about the datasets and the associated prediction tasks for which they are employed. These datasets are publicly available (some need access authorization).

\subsection{Tabular}

\textbf{eICU:} The eICU Collaborative Research Database is a dataset containing health data from the patients admitted to the United States ICUs in 2014-2015. This dataset can be accessed through PhysioNet \footnote{\url{https://physionet.org/content/eicu-crd/2.0/}} but requires to be a credentialed user on the website. For pre-processing this dataset, we followed the guidelines in prior works \citep{ulmer2020trust, azizmalayeri2023unmasking}. More specifically, we employed the pipeline provided in \citet{sheikhalishahi2020benchmarking}\footnote{\url{https://github.com/mostafaalishahi/eICU_Benchmark_updated}} to keep patients with a length of stay of at least 48 hours, age greater than 18, and known discharge status. Since some of the variables are not available for some patients, they have suggested a list of more frequent variables provided in Table \ref{tab:variables} to be used in the analysis. Patients lacking data for any of these variables are excluded from the dataset, which resulted in 54826 unique patients. Moreover, they aggregate the time-dependent variables through 6 different statistics including minimum, maximum, mean, standard deviation, skewness, and number of observations computed over windows consisting of the full time-series and its first and last 10\%, 25\%, and 50\%. This dataset is then used for the mortality prediction task, where the data collected in the first 48 hours from patients is used to predict in-hospital mortality. The mortality rate in this dataset is 6.77\%.

\textbf{MIMIC-IV:} The Medical Information Mart for Intensive Care (MIMIC) dataset provided critical care data for patients admitted to ICU at the Beth Deaconess Medical Center. This dataset is accessible via credentializing on the PhysioNet website \footnote{\url{https://physionet.org/content/mimiciv/2.2/}}. This dataset is pre-processed mainly similar to the eICU dataset. Initially, the data undergoes the pre-processing pipeline presented in \citet{gupta2022extensive}\footnote{\url{https://github.com/healthylaife/MIMIC-IV-Data-Pipeline}}, followed by filtering procedures similar to those employed in eICU. This resulted in 18180 unique patients in this dataset. This dataset is used for mortality prediction as in eICU and has a mortality rate of 12.57\%.

\begin{table}[t]
\centering
\begin{tabular}{lll}
\toprule
Description & eICU                  & MIMIC-IV                 \\ \midrule
\multicolumn{2}{l}{\textbullet{} \textit{Time-dependent}}             \vspace{3pt}  \\
Blood pH value & pH                    & pH                       \\
Body temperature & Temperature (c)        & Temperature              \\
Respiratory rate & Respiratory Rate      & Respiratory rate         \\
Blood oxygen saturation & O2 Saturation         & Oxygen saturation        \\
Mean arterial pressure & MAP (mmHg)            & Mean blood pressure      \\
Heart rate & Heart Rate            & Heart Rate               \\
Blood glucose level & glucose               & Glucose                  \\
Glasgow coma scale (total) & GCS Total             & -                \\
Glasgow coma scale (motor functions) & Motor                 & -                 \\
Glasgow coma scale (eyes) & Eyes                  & -                \\
Glasgow coma scale (verbal) & Verbal                & -                 \\
Fraction of inspired oxygen & FiO2                  & -               \\
Diastolic Blood Pressure & Invasive BP Diastolic & Diastolic blood pressure \\
Systolic Blood Pressure & Invasive BP Systolic  & Systolic blood pressure  \\ \midrule
\multicolumn{2}{l}{\textbullet{} \textit{Time-independent}}           \vspace{3pt}  \\
Gender & gender                & gender                   \\
Age & age                   & age                      \\
Ethnicity & ethnicity             & -               \\
Height at admission time & admissionheight       & -                \\
Weight at admission time & admissionweight       & -               \\
Admission type & -              & admission\_type          \\
First care unit & -             & first\_careunit     \\ \bottomrule    
\end{tabular}
\caption{Clinical variables used for each of eICU and MIMIC-IV datasets. Dash means that the variable is not included in the dataset.}
\label{tab:variables}
\end{table}

\textbf{Diabetic Retinopathy Debrecen:} Diabetic retinopathy is a kind of diabetes that affect eyes by by damaging the blood vessels of the light-sensitive tissue. To facilitate the research studies on diabetic retinopathy, Messidor \citep{decenciere2014feedback}, a collection of Diabetic Retinopathy examinations consisting of two macula-centered eye fundus images, is provided.  Diabetic Retinopathy Debrecen is a dataset containing features extracted from the Messidor images to predict whether an image has signs of diabetic retinopathy. This dataset contains 1151 instances and can be accessed through the UCI machine learning repository \footnote{\url{https://archive.ics.uci.edu/dataset/329/diabetic+retinopathy+debrecen}}. This dataset is a balanced data and the proportion of positive labels is 53.1\%.

\textbf{Dry Bean:} This dataset contains features from 7 different registered varieties of dry beans. It can be accessed through the UCI machine learning repository \footnote{\url{https://archive.ics.uci.edu/dataset/602/dry+bean+dataset}} and contains 13611 instances. 
The task in this dataset involves distinguishing various types of dry beans.

\textbf{Wine Quality:} This dataset combines data from the red and white vinho verde wine samples from the north of Portugal. It is publicly available through the UCI machine learning repository \footnote{\url{https://archive.ics.uci.edu/dataset/186/wine+quality}} and contains 4898 instances in total. This dataset is designed to predict the quality of wine based on physicochemical tests but can be used for color classification as well. In this study, we leverage it for the latter purpose.

\subsection{Images}

\textbf{MNIST:} The MNIST dataset is a collection of handwritten digits. The images in this dataset are grayscale and have a shape of 28$\times$28. It has a training and test set including 60,000 and 10,000 instances, respectively. The prediction task in this dataset involves classifying the digits.

\textbf{Fashion-MNIST:}  This dataset contains images from Zalando's articles, consisting of 60,000 images in the training set and 10,000 in the test set. The images size are 28$\times$28.

\textbf{SVHN:} This dataset contains 600,000 images of digits used in house numbers. The images are 32$\times$32 and colored.

\textbf{CIFAR-10:} The CIFAR-10 dataset consists of 10 classes of colored 32$\times$32 images. The training and test contain 50,000 and 10,000 instances, respectively.

\textbf{CIFAR-100:} The CIFAR-100 dataset consists of 100 classes of colored 32$\times$32 images. 
Each class in the training and test set contains 500 and 100 images, respectively, which makes it as large as the CIFAR-10 dataset.


\section{Proof of Theorem 1}\label{apd:proof}

Before we come to Theorem 1, we first present a lemma needed in the proof.
\begin{lemma}
Let $x\in R^D$ such that, for a given class $c$ in the output of the softmax function $\sigma$, the following limit holds:
\begin{equation}\label{eq:lemma}
    \lim_{{x_d \to \infty}} \sigma(f(x))_c = 1,
\end{equation}
where $f$ denotes an arbitrary function. Then, we can infer that:
\begin{equation}\label{eq:lemma_result}
\exists\ c', \lim_{x_d \to \infty} f(x)_{c'} =\infty.
\end{equation}
\end{lemma}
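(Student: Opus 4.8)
The plan is to read the hypothesis through the closed form of the softmax, turn it into a statement about the pairwise gaps between the coordinates of $f(x)$, and then identify the single coordinate that must carry the divergence. First I would rewrite the softmax coordinate for the distinguished class $c$ as
\[
\sigma(f(x))_c = \frac{e^{f(x)_c}}{\sum_{j} e^{f(x)_j}} = \frac{1}{1 + \sum_{j \ne c} e^{f(x)_j - f(x)_c}} .
\]
Since the left-hand side tends to $1$ as $x_d \to \infty$, the sum in the denominator tends to $0$; as every summand is nonnegative, each term must vanish separately, which is exactly
\[
f(x)_c - f(x)_j \to +\infty \qquad \text{for every } j \ne c .
\]
This is the precise analytic meaning of ``the prediction collapses onto the one-hot vector at $c$'': the score of $c$ eventually beats every competitor by an unbounded margin.

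The target coordinate is then $c' = c$, and the remaining task is to upgrade this margin statement to $f(x)_c \to +\infty$. The clean implication I would use is the following: if the competing coordinates are bounded below, say $f(x)_j \ge B$ for all $j \ne c$ and all large $x_d$, then from $f(x)_c \ge f(x)_j + \big(f(x)_c - f(x)_j\big) \ge B + \big(f(x)_c - f(x)_j\big)$ together with the divergence of the margin, we obtain $f(x)_c \to +\infty$ immediately. Selecting $c'=c$ would then finish the proof, yielding the stated conclusion with the limit equal to $+\infty$ (not merely unbounded in magnitude).

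I expect the main obstacle to be precisely the justification of this boundedness-below, because the softmax is invariant under adding a common constant to every coordinate: the margin condition by itself is compatible with the degenerate scenario in which the competitors $f(x)_j$ all sink to $-\infty$ while $f(x)_c$ stays bounded, and in that scenario \emph{no} coordinate tends to $+\infty$. Excluding this scenario is exactly what separates the stated $+\infty$ conclusion from the weaker claim that some coordinate is merely unbounded, so it is the crux of the argument. I would close it by invoking the structure of the regime in which the lemma is applied — the coordinates of $f$ arising as a fixed affine read-out evaluated along the scaling ray $x' = \alpha \odot x$, so that on the eventual linear region each coordinate is controlled, and in particular bounded below, as the scaling parameter grows — and then feed that boundedness-below into the implication above. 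I would also double-check the tie case, where several coordinates share the leading growth rate, since the statement only requires one coordinate to diverge to $+\infty$, and confirm that the monotone scaling setup delivers a genuine limit rather than only a $\limsup$.
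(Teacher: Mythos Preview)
Your margin derivation is sound and more careful than the paper's route. The paper argues differently: it invokes continuity of the softmax to pull the limit inside, writing $\sigma\bigl[\lim_{x_d\to\infty} f(x)\bigr]_c = 1$, unfolds this to $e^{\lim f(x)_c} = \sum_{c'} e^{\lim f(x)_{c'}}$, and then observes that this equality is impossible if every $\lim f(x)_{c'}$ is finite, since the right-hand side strictly contains the left as one summand among several positive terms. That shortcut tacitly assumes each coordinate limit $\lim_{x_d\to\infty} f(x)_{c'}$ already exists --- the very issue you flag at the end of your proposal --- so your approach is the more rigorous of the two, not the less.

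One place you are working harder than necessary: the paper explicitly concludes that the diverging limit is ``plus or minus infinity,'' so the $\infty$ in the statement is intended as divergence in magnitude, not specifically $+\infty$. Under that weaker reading your shift-invariance obstacle largely dissolves: from $f(x)_c - f(x)_j \to +\infty$ for every $j\ne c$, either $f(x)_c$ is unbounded above, or else it is bounded above and every competitor satisfies $f(x)_j \to -\infty$; in both cases some coordinate is unbounded. The residual gap --- upgrading ``unbounded'' to an honest limit --- is real for truly arbitrary $f$, is not addressed by the paper's proof either, and is precisely what your appeal to the affine/monotone structure of the surrounding theorem is meant to close. So your plan is best read as a tightening of the paper's argument rather than a departure from it.
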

\begin{proof}
Given the continuity of the softmax, we can rewrite the limit of the composition of the two functions as 
\begin{equation}
    \sigma[\lim_{{x_d \to \infty}} f(x)]_c = 1,
\end{equation}
Unfolding the definition of the softmax and moving the denominator across the equality we can conclude that 
\begin{equation}
    e^{\lim_{x_d \to \infty} [f(x)]_c} = \sum_{c'=1}^{|C|} e^{\lim_{x_d \to \infty} [f(x)]_{c'}},
\end{equation}
Since the output of the exponential is always a positive number larger than 0, the previous equation cannot hold if  for all class indexes $c'$, $\lim_{{x_d \to \infty}} [f(x)]_{c'}$ is a finite number. Hence, for at least one index said limit must equal (plus or minus) infinity.

\end{proof} 
\setcounter{theorem}{0}
\begin{theorem}
Let $x\in R^D$ and suppose $\alpha$ is a scaling vector.  Now $x'= \alpha \odot x$ can be considered as an OOD example if $\alpha$ is large enough. Let $h_\theta$ be any neural network whose last layer is linear, genearting an overconfident prediction for class $c$ on $x'$ in a C-class classification as:
\begin{equation}\label{eq:oc}
    \lim_{{\alpha_d \to \infty}} {\sigma(h_\theta (x'))}_c = 1,
\end{equation}
where $\sigma$  is the Softmax function. Then, 
we infer that there exists at least a dimension in which the output of the penultimate layer goes to infinity in the limit:
\begin{equation}
    \exists\ k, \lim_{\alpha_d \to \infty} {(x_{R-1}')}_k = \infty,
\end{equation}
where $x_{R-1}' \in R^{D'}$ is the output of the penultimate layer.
\end{theorem}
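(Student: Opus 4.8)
The plan is to push the divergence guaranteed by Lemma~1 at the level of the logits back through the final linear layer. Write the network as a composition $h_\theta = \ell \circ \phi$, where $\phi(x') = x_{R-1}' \in R^{D'}$ is the penultimate representation and $\ell(z) = W z + b$ is the final linear layer, with $W$ a $C \times D'$ matrix and $b \in R^{C}$ a bias vector, both with fixed finite entries. With all other coordinates of $\alpha$ and the whole vector $x$ held fixed, $h_\theta(x')$ is a function of the single scalar $\alpha_d$, so the overconfidence assumption \eqref{eq:oc} is precisely the hypothesis of Lemma~1 with $f = h_\theta$ and with $\alpha_d$ in the role of $x_d$.

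Applying Lemma~1 then yields an index $c'$ with $\lim_{\alpha_d \to \infty} h_\theta(x')_{c'} = \infty$ (more precisely $\pm\infty$, as the lemma's proof shows). Now I would transfer this to $\phi$ using linearity: since $h_\theta(x')_{c'} = \sum_{k=1}^{D'} W_{c'k}\,(x_{R-1}')_k + b_{c'}$ and every $W_{c'k}$ and $b_{c'}$ is finite, boundedness of all coordinates $(x_{R-1}')_k$ as $\alpha_d \to \infty$ would force $h_\theta(x')_{c'}$ to stay bounded, contradicting the previous sentence. Hence at least one coordinate $(x_{R-1}')_k$ is unbounded along $\alpha_d \to \infty$, which is the assertion of the theorem.

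The step I expect to be the main obstacle is the passage from \emph{unbounded} to \emph{a genuine limit equal to} $+\infty$: a priori a coordinate could oscillate, or diverge towards $-\infty$. To close this cleanly I would invoke the structure already recalled in the paper for piece-wise affine (ReLU) networks: there is a $\delta$ such that for all $\alpha_d > \delta$ the scaled input $x'$ stays in a single polytope, on which $\phi$ is affine in $\alpha_d$; hence each coordinate $(x_{R-1}')_k$ is eventually monotone in $\alpha_d$ and has a limit in $[-\infty, +\infty]$, and since ReLU activations are non-negative an unbounded such coordinate must tend to $+\infty$. This gives $\exists\, k,\ \lim_{\alpha_d \to \infty} (x_{R-1}')_k = \infty$. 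For a network that is merely assumed to have a linear last layer, one still obtains the divergence $|(x_{R-1}')_k| \to \infty$ for some $k$ (along a subsequence), and the stated one-sided form follows under the same mild regularity of the scaling ray.
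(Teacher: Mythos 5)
Your proposal is correct and follows essentially the same route as the paper's proof: decompose $h_\theta$ through the final linear layer, apply Lemma~1 to obtain a diverging logit $h_\theta(x')_{c'}$, and conclude from finiteness of the weights that some penultimate coordinate must blow up. You are in fact more careful than the paper at the last step---the paper asserts directly that one summand of $\sum_{k}(w_R)_{c',k}\,(x'_{R-1})_k$ must tend to infinity, which in general only yields unboundedness (summands could oscillate or diverge to $-\infty$), whereas your appeal to eventual affineness of each coordinate on a single polytope together with the non-negativity of ReLU activations is precisely what upgrades this to a genuine limit of $+\infty$.
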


\begin{proof}
Let $w_R \in R^{C\times D'}$  and $b_R \in R^{C}$ denote the weights and biases at the last linear layer of the neural network $h_\theta$. Then, $h_\theta(x')$ can be formulated as:
\begin{equation}\label{eq:linear}
    h_\theta (x') = w_R x_{R-1}' + b_R.
\end{equation}
Furthermore, by Lemma 1, Equation \ref{eq:oc} implies that:
\begin{equation}\label{eq:infty}
\exists\ c', \lim_{\alpha_d \to \infty} {h_\theta (x')}_{c'} = \infty.
\end{equation}
Now, substituting Equation \ref{eq:linear} into Equation \ref{eq:infty}, we have:
\begin{equation}\label{eq:pen}
\exists\ c', \lim_{\alpha_d \to \infty} {(w_R x'_{R-1} + b_R)}_{c'} = \infty,
\end{equation}
where $b_R$ is just a vector of scalar values, which can be disregarded in this limit. So, we rewrite the matrix multiplication for index $c'$ in Equation \ref{eq:pen} as:
\begin{equation}
\exists\ c', \lim_{\alpha_d \to \infty} \sum_{k=1}^{D'} {{({w_R})}_{c',k}\ {{({x}_{R-1}'})}_k} = \infty, 
\end{equation}
where $D'$ is the set of indices of the output of the penultimate layer. This entails that at least one of the members of the sum must tend to infinity. Since ${({w_R})}_{c',k}$ is just a scalar value,  we deduce that:
\begin{equation}
\exists\ k, \lim_{\alpha_d \to \infty} {{{({x}_{R-1}'})}_k} = \infty.
\end{equation}
This means that the feature vector at the penultimate layer consists of at least one value that goes to infinity in the limit, completing the proof.
\end{proof}

\section{Additional results}\label{apd:additional_results}
This section includes some extra results for the experiments discussed in the main text.

\subsection{Tabular data}
Results for three other tabular datasets are illustrated in Fig. \ref{fig:tabular_apd}, which aligns with the other tabular datasets discussed in the main text.

\begin{figure}[ht]
\centering
\begin{subfigure}{}
\centering
     \includegraphics[width=\linewidth]{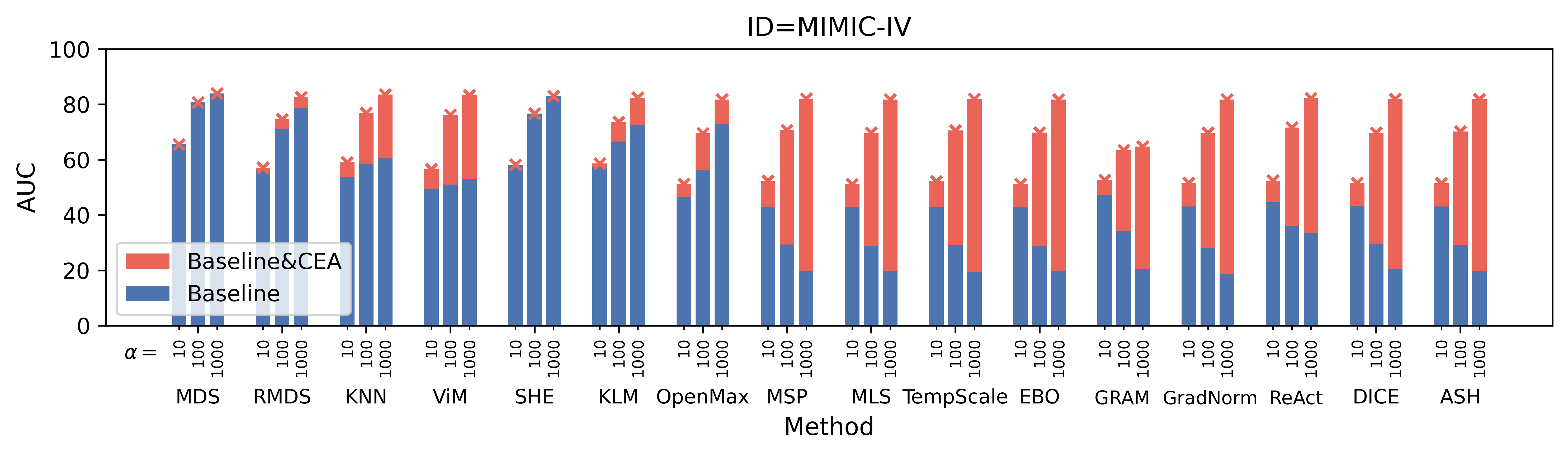}
\end{subfigure}
\\
\begin{subfigure}{}
\centering
\vskip -10pt
         \includegraphics[width=\linewidth]{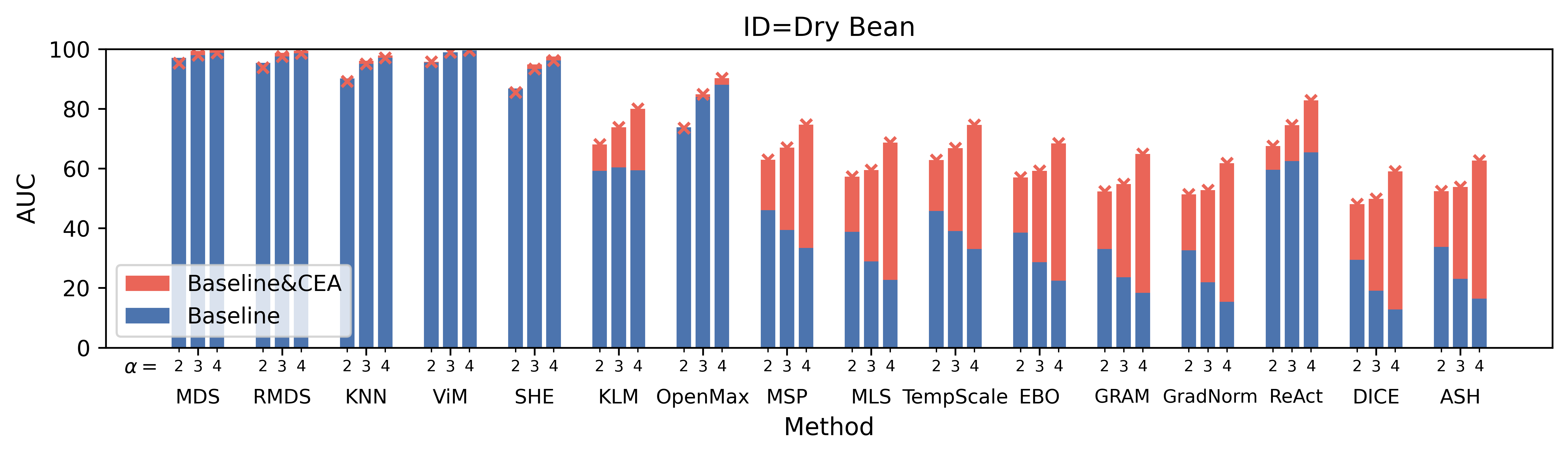}
\end{subfigure}
\\
\begin{subfigure}{}
\centering
\vskip -10pt
        \includegraphics[width=\linewidth]{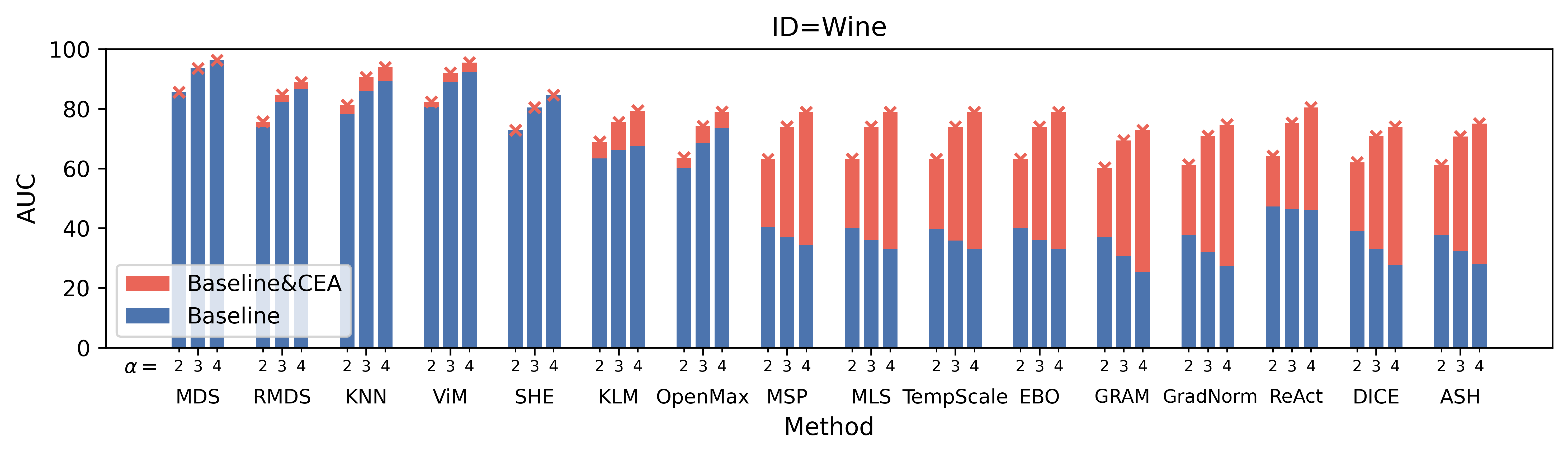}
\end{subfigure}
\caption{OOD detection performance with and without CEA using the MIMIC-IV (top), Dry Bean (middle), and Wine Quality (bottom) datasets as ID  and synthesized data by scaling. The blue bars are positioned in front of the red ones and cross markers are employed to emphasize the top of the red bars. The scaling factors and baseline names are presented under each bar.}
    \label{fig:tabular_apd}
\end{figure}

\subsection{Architectures and LogitNorm}

In the results presented for other types of architectures and LogitNorm training, we only included some of the baseline detection methods to keep the page limit. 
Results for other detection methods are displayed in tables \ref{tab:artitechtures_extra_results} and \ref{tab:logitnorm_extra_results}. Conclusions on the baselines included in these tables are the same as the others discussed in the results section.

\begin{table*}[ht]
\centering
\setlength{\tabcolsep}{7.3pt}
\caption{
AUC of OOD detection with and without CEA using tabular ResNet and Transformer as the prediction model. We use eICU and Diabetics as ID and synthesize the OOD data by scaling factor $\alpha$. Superior results are emphasized in bold unless the two are equal. This table is similar to Table \ref{tab:artitechtures}, but includes different baseline detection methods.}
\begin{tabular}{ccccc|ccc}
\toprule
\multicolumn{1}{l}{}                            &                            & \multicolumn{3}{c|}{ResNet}                                                  & \multicolumn{3}{c}{Transformer}                                             \\ \cmidrule{3-8} 
\multicolumn{1}{l}{}                            & \multicolumn{1}{c}{}       & \multicolumn{1}{c}{$\alpha=10$} & \multicolumn{1}{c}{$\alpha=100$} & \multicolumn{1}{c|}{$\alpha=1000$} & \multicolumn{1}{c}{$\alpha=10$} & \multicolumn{1}{c}{$\alpha=100$} & \multicolumn{1}{c}{$\alpha=1000$} \\ \cmidrule{3-8} 
ID                     & Method                     & \multicolumn{6}{c}{Baseline / Baseline\hskip0.75pt\&\hskip0.75pt CEA}                                                                                                                          \\ \midrule
\multicolumn{1}{l|}{\multirow{9}{*}{eICU} }                           & \multicolumn{1}{l|}{RMDS}
     & 52.6 / \textbf{66.4} & 64.7 / \textbf{85.5} & 79.4 / \textbf{93.4} & 52.2 / 52.4 & 60.8 / \textbf{61.0} & 72.9 / \textbf{73.2} \\
     \multicolumn{1}{l|}{}                           & \multicolumn{1}{l|}{SHE}
                                 & 72.9 / 72.9 & \textbf{89.9} / 89.8 & 93.8 / \textbf{93.9} & 57.7 / 57.7 & 73.2 / 73.2 & 81.5 / \textbf{81.6} \\
       \multicolumn{1}{l|}{}                           & \multicolumn{1}{l|}{KLM}                               
                                  & 58.7 / \textbf{66.8} & 73.0 / \textbf{85.6} & 82.9 / \textbf{93.3} & 56.0 / \textbf{56.2} & 65.1 / \textbf{66.1} & 72.8 / \textbf{73.2} \\
           \multicolumn{1}{l|}{}                           & \multicolumn{1}{l|}{OpenMax}   
                         & 58.6 / \textbf{69.2} & 69.6 / \textbf{87.1} & 79.6 / \textbf{93.6} & 51.1 / \textbf{51.7} & 54.2 / \textbf{56.1} & 71.2 / \textbf{72.6} \\
                       \multicolumn{1}{l|}{}                           & \multicolumn{1}{l|}{MLS}
                               & 46.5 / \textbf{69.2} & 28.8 / \textbf{87.1} & 13.3 / \textbf{93.6} & 51.7 / \textbf{52.4} & 56.3 / \textbf{58.0} & 71.8 / \textbf{73.4} \\
                               \multicolumn{1}{l|}{}                           & \multicolumn{1}{l|}{TempScale}
                           & 47.9 / \textbf{69.5} & 30.6 / \textbf{87.3} & 13.2 / \textbf{93.6} & 51.7 / \textbf{52.5} & 56.1 / \textbf{58.3} & 71.7 / \textbf{73.5} \\
                            \multicolumn{1}{l|}{}                           & \multicolumn{1}{l|}{GradNorm}
                           & 37.1 / \textbf{66.5} & 17.3 / \textbf{84.9} & 7.2 / \textbf{93.4}  & 53.5 / \textbf{54.1} & 63.1 / \textbf{64.2} & 76.3 / \textbf{77.1} \\
                           \multicolumn{1}{l|}{}                           & \multicolumn{1}{l|}{DICE}
                              & 42.2 / \textbf{67.0} & 24.3 / \textbf{85.6} & 10.0 / \textbf{93.4} & 53.3 / \textbf{53.8} & 62.8 / \textbf{63.6} & 76.0 / \textbf{76.8} \\
                           \multicolumn{1}{l|}{}                           & \multicolumn{1}{l|}{ASH}
                               & 47.2 / \textbf{69.5} & 30.8 / \textbf{87.2} & 14.4 / \textbf{93.5} & 51.7 / \textbf{52.4} & 56.0 / \textbf{57.8} & 70.5 / \textbf{72.1} \\ \midrule
                           \multicolumn{1}{l|}{\multirow{9}{*}{Diabetics}}                           & \multicolumn{1}{l|}{RMDS}
      & 74.8 / \textbf{77.4} & 85.8 / \textbf{87.5} & 90.1 / \textbf{91.0} & 65.3 / \textbf{65.4} & 80.2 / \textbf{80.3} & 86.3 / \textbf{86.3} \\
      \multicolumn{1}{l|}{}                           & \multicolumn{1}{l|}{SHE}
                                  & 81.9 / 81.9 & 88.7 / 88.7 & 91.5 / 91.5 & 69.2 / \textbf{69.3} & 85.3 / 85.3 & 90.0 / 90.0 \\
    \multicolumn{1}{l|}{}                           & \multicolumn{1}{l|}{KLM}
                               & 74.0 / \textbf{77.2} & 84.9 / \textbf{87.0} & 89.8 / \textbf{90.4} & 55.9 / \textbf{56.2} & 57.8 / \textbf{57.9} & 56.4 / \textbf{57.3} \\
    \multicolumn{1}{l|}{}                           & \multicolumn{1}{l|}{OpenMax}
                              & 55.8 / \textbf{73.7} & 66.2 / \textbf{86.6} & 68.8 / \textbf{90.2} & 42.7 / \textbf{43.9} & 50.0 / \textbf{50.6} & 64.2 / \textbf{65.6} \\
    \multicolumn{1}{l|}{}                           & \multicolumn{1}{l|}{MLS}
                                 & 33.6 / \textbf{67.8} & 22.7 / \textbf{86.2} & 18.4 / \textbf{90.2} & 41.0 / \textbf{41.9} & 50.3 / \textbf{50.7} & 66.3 / \textbf{67.4} \\
    \multicolumn{1}{l|}{}                           & \multicolumn{1}{l|}{TempScale}
                           & 25.6 / \textbf{67.4} & 15.5 / \textbf{86.3} & 10.5 / \textbf{90.3} & 38.4 / \textbf{39.9} & 47.3 / \textbf{48.2} & 61.8 / \textbf{62.7} \\
    \multicolumn{1}{l|}{}                           & \multicolumn{1}{l|}{GradNorm}
                           & 21.4 / \textbf{65.2} & 13.2 / \textbf{85.7} & 9.7 / \textbf{90.0}  & 38.7 / \textbf{39.7} & 47.6 / \textbf{48.2} & 62.2 / \textbf{63.0} \\
    \multicolumn{1}{l|}{}                           & \multicolumn{1}{l|}{DICE}
                                 & 22.5 / \textbf{66.1} & 12.9 / \textbf{85.4} & 9.5 / \textbf{89.8}  & 63.1 / \textbf{64.0} & 82.6 / \textbf{82.8} & 88.9 / \textbf{89.0} \\
    \multicolumn{1}{l|}{}                           & \multicolumn{1}{l|}{ASH}
                                  & 35.0 / \textbf{68.9} & 23.3 / \textbf{86.1} & 17.8 / \textbf{90.2} & 43.1 / \textbf{44.0} & 51.5 / \textbf{52.0} & 65.2 / \textbf{65.6}
               \\ \bottomrule
\end{tabular}
\label{tab:artitechtures_extra_results}
\end{table*}

\begin{table*}[ht]
\centering
\setlength{\tabcolsep}{4.0pt}
\caption{
AUC of OOD detection with and without CEA on the model trained with the LogitNorm loss.
We use eICU and Diabetics as ID and synthesize the OOD data by scaling factor $\alpha$. Superior results are emphasized in bold unless the two are equal. This table is similar to Table \ref{tab:logitnorm}, but includes different baseline detection methods.}
\resizebox{\columnwidth}{!}{
\begin{tabular}{ccccccccccc}
\toprule
\multirow{2}{*}{ID}        & \multirow{2}{*}{$\alpha$} & RMDS      & SHE       & KLM       & OpenMax   & MLS       & TempScale & \multicolumn{1}{c}{GradNorm} & \multicolumn{1}{c}{DICE} & \multicolumn{1}{c}{ASH} \\ \cmidrule{3-11} 
                           &                                        & \multicolumn{9}{c}{Baseline / Baseline\hskip0.75pt\&\hskip0.75pt CEA}                                                                          \\ \midrule
\multirow{3}{*}{eICU}      & 10                                     & 61.3 / \textbf{64.9} & 65.3 / 65.3 & 51.5 / \textbf{63.9} & 52.6 / \textbf{65.7} & 55.1 / \textbf{67.3} & 55.1 / \textbf{67.3} & 33.9 / \textbf{60.5}                    & 38.6 / \textbf{63.0}                & 57.3 / \textbf{65.4}               \\
                           & 100                                    & 74.8 / \textbf{80.1} & 80.9 / 80.9 & 52.0 / \textbf{78.6} & 53.6 / \textbf{80.0} & 61.8 / \textbf{82.7} & 61.8 / \textbf{82.7} & 18.9 / \textbf{76.1}                    & 25.3 / \textbf{77.8}               & 62.4 / \textbf{80.1}               \\
                           & 1000                                   & 85.2 / \textbf{89.1} & 89.9 / \textbf{90.0} & 52.4 / \textbf{89.0} & 54.3 / \textbf{89.2} & 64.6 / \textbf{90.0} & 64.6 / \textbf{90.0} & 10.3 / \textbf{87.9}                    & 14.0 / \textbf{88.3}                & 63.2 / \textbf{89.3}               \\ \midrule
\multirow{3}{*}{Diabetics} & 10                                     & 68.9 / \textbf{73.3} & 78.8 / 78.8 & 56.7 / \textbf{73.8} & 60.8 / \textbf{74.9} & 35.2 / \textbf{65.1} & 35.2 / \textbf{65.1} & 21.4 / \textbf{60.9}                    & 19.1 / \textbf{60.1 }               & 19.2 / \textbf{60.2}               \\
                           & 100                                    & 84.2 / \textbf{86.8} & 88.4 / 88.4 & 59.3 / \textbf{86.0 }& 63.8 / \textbf{86.7} & 32.6 / \textbf{85.7} & 32.6 / \textbf{85.7} & 12.6 / \textbf{84.6}                    & 11.5 / \textbf{83.9}                & 11.6 / \textbf{84.1}               \\
                           & 1000                                   & 89.7 / \textbf{90.8} & 91.5 / 91.5 & 62.1 / \textbf{90.0 }& 65.6 / \textbf{90.6} & 31.8 / \textbf{90.1} & 31.8 / \textbf{90.1} & 9.5 / \textbf{89.6 }                    & 8.7 / \textbf{89.0}                 & 8.8 / \textbf{89.1}         \\ \bottomrule      
\end{tabular}
}
\label{tab:logitnorm_extra_results}
\end{table*}

\subsection{Extension to Images}
Results for the CIFAR-100 dataset are displayed in Table \ref{tab:images_apd}. According to this table, when CIFAR-100 is the ID set, our method can improve results significantly within synthesized OOD sets, but not on real-world OOD sets. For instance, results on real-world OOD sets are the same with and without our method in the ResNet-32 model. 
This table also provides results for detection methods not included in the main text for the MNIST and CIFAR-10 datasets due to page limits, which follow the same trend as those in the main text.

\begin{table*}[ht]
\centering
\setlength{\tabcolsep}{7.3pt}
\caption{AUC of OOD detection with and without CEA in image datasets. MNIST, CIFAR-10, and CIFAR-100 serve as ID, and OOD sets are synthesized by i) scaling or ii) an adversarial attack, or iii) selected from other datasets. ResNet-32 and ReLU MLP classifiers are used as the prediction model. Superior results are in bold unless the two are equal. This table is similar to Table \ref{tab:images}, but includes CIFAR-100 as an ID dataset and different baseline detection methods.}
\begin{tabular}{llcccccc}
\toprule
                                                 &                                & \multicolumn{3}{c|}{ReLU MLP}                                & \multicolumn{3}{c}{ResNet-32}           \\ \cmidrule{3-8} 
                                                 & \multicolumn{1}{c}{}           & Scale       & Attack      & \multicolumn{1}{c|}{Other}       & Scale       & Attack      & Other       \\ \cmidrule{3-8} 
\multicolumn{1}{c}{ID}                           & \multicolumn{1}{c}{Method}     & \multicolumn{6}{c}{Baseline / Baseline\hskip0.75pt\&\hskip0.75pt CEA}                                                                          \\ \midrule
\multicolumn{1}{l|}{\multirow{9}{*}{MNIST}}      & \multicolumn{1}{l|}{RMDS}      & 62.6 / \textbf{62.7} & 83.6 / \textbf{84.1} & \multicolumn{1}{c|}{96.7 / \textbf{96.9}} & 68.0 / 68.0 & 99.7 / 99.7 & 99.6 / 99.6 \\
\multicolumn{1}{l|}{}                            & \multicolumn{1}{l|}{SHE}       & 63.3 / 63.3 & 92.1 / \textbf{93.2} & \multicolumn{1}{c|}{\textbf{86.3} / 85.9} & 61.0 / 61.0 & 99.9/99.9   & 99.7 / 99.7 \\
\multicolumn{1}{l|}{}                            & \multicolumn{1}{l|}{KLM}       & 57.9 / \textbf{64.9} & 69.5 / \textbf{80.8} & \multicolumn{1}{c|}{79.5 / \textbf{85.8}} & 56.5 / \textbf{59.2} & 45.3 / \textbf{98.7} & 85.5 / \textbf{96.6} \\
\multicolumn{1}{l|}{}                            & \multicolumn{1}{l|}{OpenMax}   & 62.7 / \textbf{62.8} & 85.8 / \textbf{88.5} & \multicolumn{1}{c|}{90.4 / \textbf{92.0}} & 57.2 / \textbf{57.4} & 99.2 / \textbf{99.6} & 99.4 / \textbf{99.5} \\
\multicolumn{1}{l|}{}                            & \multicolumn{1}{l|}{MLS}       & 45.9 / \textbf{56.5} & 7.4 / \textbf{42.5}  & \multicolumn{1}{c|}{81.2 / \textbf{92.7}} & 50.6 / \textbf{56.5} & 2.7 / \textbf{98.8}  & 76.5 / \textbf{99.0} \\
\multicolumn{1}{l|}{}                            & \multicolumn{1}{l|}{TempScale} & 49.8 / \textbf{50.8} & 15.3 / \textbf{65.6} & \multicolumn{1}{c|}{80.3 / \textbf{89.2}} & 53.4 / \textbf{55.9} & 6.0 / \textbf{99.4}  & 94.8 / \textbf{98.4} \\
\multicolumn{1}{l|}{}                            & \multicolumn{1}{l|}{GradNorm}  & 43.4 / \textbf{57.2} & 5.9 / \textbf{41.3}  & \multicolumn{1}{c|}{39.0 / \textbf{61.6}} & 37.4 / \textbf{64.4} & 1.3 / \textbf{98.9}  & 62.8 / \textbf{98.3} \\
\multicolumn{1}{l|}{}                            & \multicolumn{1}{l|}{DICE}      & 42.1 / \textbf{59.1} & 5.9 / \textbf{43.1}  & \multicolumn{1}{c|}{59.3 / \textbf{81.9}} & 38.0 / \textbf{63.9} & 3.2 / \textbf{99.5}  & 57.5 / \textbf{95.1} \\
\multicolumn{1}{l|}{}                            & \multicolumn{1}{l|}{ASH}       & 42.2 / \textbf{60.1} & 7.5 / \textbf{42.5}  & \multicolumn{1}{c|}{81.2 / \textbf{92.7}} & 51.4 / \textbf{55.3} & 2.7 / \textbf{98.8}  & 76.1 / \textbf{98.9} \\ \midrule
\multicolumn{1}{l|}{\multirow{9}{*}{CIFAR-10}}   & \multicolumn{1}{l|}{RMDS}      & 89.8 / \textbf{95.4} & 53.6 / \textbf{68.7} & \multicolumn{1}{c|}{58.5 / \textbf{61.3}} & 98.0 / 98.0 & 99.9 / 99.9 & 87.4 / 87.4 \\
\multicolumn{1}{l|}{}                            & \multicolumn{1}{l|}{SHE}       & 98.0 / 98.0 & \textbf{95.1} / 94.9 & \multicolumn{1}{c|}{60.6 / 60.4} & 96.8 / 96.8 & 99.9 / 99.9 & 85.8 / 85.8 \\
\multicolumn{1}{l|}{}                            & \multicolumn{1}{l|}{KLM}       & 88.8 / \textbf{96.1} & 90.9 / \textbf{93.9} & \multicolumn{1}{c|}{57.7 / \textbf{59.0}} & 78.9 / \textbf{94.1} & 68.4 / \textbf{92.2} & 80.5 / 80.5 \\
\multicolumn{1}{l|}{}                            & \multicolumn{1}{l|}{OpenMax}   & 87.0 / \textbf{95.9} & 75.1 / \textbf{85.3} & \multicolumn{1}{c|}{71.4 / \textbf{71.7}} & 96.2 / \textbf{96.7} & 99.9 / 99.9 & 86.7 / \textbf{87.1} \\
\multicolumn{1}{l|}{}                            & \multicolumn{1}{l|}{MLS}       & 20.2 / \textbf{94.0} & 8.4 / \textbf{35.5}  & \multicolumn{1}{c|}{67.7 / \textbf{72.5}} & 64.3 / \textbf{97.6} & 0.0 / \textbf{83.8}  & 90.1 / \textbf{90.2} \\
\multicolumn{1}{l|}{}                            & \multicolumn{1}{l|}{TempScale} & 19.3 / \textbf{92.2} & 9.6 / \textbf{53.9}  & \multicolumn{1}{c|}{58.2 / \textbf{64.2}} & 93.1 / \textbf{98.5} & 0.0 / \textbf{99.6}  & 88.3 / \textbf{88.9} \\
\multicolumn{1}{l|}{}                            & \multicolumn{1}{l|}{GradNorm}  & 3.7 / \textbf{86.9 } & 4.0 / \textbf{20.9}  & \multicolumn{1}{c|}{49.4 / \textbf{55.4}} & 11.9 / \textbf{71.5} & 0.0 / \textbf{91.9}  & 66.3 / \textbf{67.3} \\
\multicolumn{1}{l|}{}                            & \multicolumn{1}{l|}{DICE}      & 4.8 / \textbf{92.5}  & 4.8 / \textbf{55.1}  & \multicolumn{1}{c|}{58.9 / \textbf{68.0}} & 52.4 / \textbf{93.1} & 0.0 / \textbf{88.6}  & 90.1 / 90.1 \\
\multicolumn{1}{l|}{}                            & \multicolumn{1}{l|}{ASH}       & 22.5 / \textbf{95.1} & 9.1 / \textbf{50.3}  & \multicolumn{1}{c|}{68.5 / \textbf{72.6}} & 68.7 / \textbf{97.1} & 0.0 / \textbf{84.4}  & 90.3 / 90.3 \\ \midrule
\multicolumn{1}{l|}{\multirow{16}{*}{CIFAR-100}} & \multicolumn{1}{l|}{MDS}       & 96.7 / \textbf{97.0} & \textbf{94.5} / 94.4 & \multicolumn{1}{c|}{57.7 / 57.7} & 98.7 / 98.7 & 99.9 / 99.9 & 53.1 / 53.1 \\
\multicolumn{1}{l|}{}                            & \multicolumn{1}{l|}{KNN}       & 76.5 / \textbf{96.3} & 56.5 / \textbf{86.2} & \multicolumn{1}{c|}{63.4 / \textbf{66.3}} & 99.1 / 99.1 & 64.8 / \textbf{97.3} & 76.0 / 76.0 \\
\multicolumn{1}{l|}{}                            & \multicolumn{1}{l|}{ViM}       & 65.2 / \textbf{96.1} & 72.3 / \textbf{79.2} & \multicolumn{1}{c|}{60.0 / \textbf{61.6}} & 5.6 / \textbf{95.2}  & 0.1 / \textbf{83.7}  & 82.1 / 82.1 \\
\multicolumn{1}{l|}{}                            & \multicolumn{1}{l|}{MSP}       & 8.0 / \textbf{84.7}  & 11.1 / \textbf{31.7} & \multicolumn{1}{c|}{49.7 / \textbf{50.9}} & 29.0 / \textbf{97.6} & 2.0 / \textbf{91.5}  & 75.0 / \textbf{75.1} \\
\multicolumn{1}{l|}{}                            & \multicolumn{1}{l|}{EBO}       & 55.0 / \textbf{94.7} & 64.6 / \textbf{75.6} & \multicolumn{1}{c|}{60.2 / \textbf{61.6}} & 5.6 / \textbf{95.2}  & 0.1 / \textbf{83.3}  & 82.1 / 82.1 \\
\multicolumn{1}{l|}{}                            & \multicolumn{1}{l|}{ReAct}     & 89.0 / \textbf{95.4} & 86.7 / \textbf{89.8} & \multicolumn{1}{c|}{59.7 / \textbf{61.7}} & 93.2 / \textbf{98.4} & 12.8 / \textbf{91.7} & 77.7 / 77.7 \\
\multicolumn{1}{l|}{}                            & \multicolumn{1}{l|}{Gram}      & 4.9 / \textbf{11.4}  & 9.4 / \textbf{19.5}  & \multicolumn{1}{c|}{54.4 / \textbf{55.5}} & 1.2 / \textbf{79.6}  & 0.0 / \textbf{62.9}  & 72.3 / 72.3 \\
\multicolumn{1}{l|}{}                            & \multicolumn{1}{l|}{RMDS}      & 89.0 / \textbf{90.5} & 45.0 / \textbf{52.6} & \multicolumn{1}{c|}{53.4 / \textbf{53.5}} & 99.2 / 99.2 & 99.8 / \textbf{99.9} & 71.7 / \textbf{71.8} \\
\multicolumn{1}{l|}{}                            & \multicolumn{1}{l|}{SHE}       & \textbf{97.6} / 97.4 & 94.3 / \textbf{94.5} & \multicolumn{1}{c|}{58.0 / \textbf{58.6}} & 98.5 / 98.5 & 99.9 / 99.9 & \textbf{57.1} / 56.9 \\
\multicolumn{1}{l|}{}                            & \multicolumn{1}{l|}{KLM}       & 93.8 / \textbf{95.5} & 84.5 / \textbf{93.3} & \multicolumn{1}{c|}{59.5 / \textbf{60.4}} & 55.4 / \textbf{98.1} & 42.6 / \textbf{98.4} & 75.5 / 75.5 \\
\multicolumn{1}{l|}{}                            & \multicolumn{1}{l|}{OpenMax}   & 78.5 / \textbf{82.3} & 79.9 / \textbf{86.0} & \multicolumn{1}{c|}{55.1 / \textbf{55.9}} & 62.7 / \textbf{84.7} & 60.3 / \textbf{93.5} & 70.0 / \textbf{70.1} \\
\multicolumn{1}{l|}{}                            & \multicolumn{1}{l|}{MLS}       & 46.9 / \textbf{67.7} & 36.0 / \textbf{49.2} & \multicolumn{1}{c|}{64.8 / \textbf{64.9}} & 4.4 / \textbf{97.9}  & 0.0 / \textbf{98.7}  & 81.0 / \textbf{81.5} \\
\multicolumn{1}{l|}{}                            & \multicolumn{1}{l|}{TempScale} & 8.2 / \textbf{75.8}  & 9.2 / \textbf{31.1}  & \multicolumn{1}{c|}{48.5 / \textbf{49.0}} & 34.1 / \textbf{98.1} & 1.7 / \textbf{99.6}  & 75.5 / \textbf{75.8} \\
\multicolumn{1}{l|}{}                            & \multicolumn{1}{l|}{GradNorm}  & 3.4 / \textbf{78.3}  & 5.2 / \textbf{14.4}  & \multicolumn{1}{c|}{46.7 / \textbf{47.6}} & 1.5 / \textbf{96.7}  & 0.0 / \textbf{99.5}  & 80.9 / 80.9 \\
\multicolumn{1}{l|}{}                            & \multicolumn{1}{l|}{DICE}      & 5.0 / \textbf{87.6}  & 5.6 / \textbf{57.5}  & \multicolumn{1}{c|}{51.8 / \textbf{55.5}} & 3.8 / \textbf{96.7}  & 0.1 / \textbf{99.4}  & 82.5 / 82.5 \\
\multicolumn{1}{l|}{}                            & \multicolumn{1}{l|}{ASH}       & 56.8 / \textbf{93.6} & 55.4 / \textbf{76.8} & \multicolumn{1}{c|}{62.1 / \textbf{62.8}} & 2.9 / \textbf{98.4}  & 0.0 / \textbf{98.9}  & 82.1 / 82.1 \\ \bottomrule
\end{tabular}
\label{tab:images_apd}
\end{table*}

\section{Model Calibration}\label{apd:calibration}

Overconfidence and calibration on ID sets are not the same, since whether a model is calibrated on ID data may not influence how its confidence changes on OOD points, because by definition OOD instances come from a different distribution. Moreover, theoretical results demonstrate that some architectures are always overconfident, regardless of their level of calibration on ID data \citep{hein2019relu, ulmer2021know}.

This said, there may indeed be architectures for which an improved calibration enhances OOD detection, but as far as we know this remains to be proven in general. To assess the impact of calibration on our outcomes, we have included temperature scaling \citep{guo2017calibration} combined with MSP among our baselines (called TempScale). Table \ref{tab:calibration} presents the expected calibration error (ECE) with and without temperature scaling for the ResNet model trained on our datasets, which quantifies the improvement in ID calibration of models after Temp Scaling. Also, the OOD detection results stated before indicate that temperature scaling improves the AUC of OOD detection over MSP only marginally: at most 1\% across all the experiments (e.g., see Figures \ref{fig:tabular} and \ref{fig:mimic_vs_eicu}).
This suggests that ID calibration might be beneficial for OOD detection, although it is not enough to make such a claim.

\begin{table}[ht]
\centering
\caption{The ECE (\%) using M~=~15 bins with and without temperature scaling for the ResNet model trained on our datasets.}
\begin{tabular}{cccccc}
\toprule
   Temp Scaling                  & MNIST & CIFAR10 & CIFAR100 & eICU & MIMIC-IV \\ \midrule
\ding{55} & 0.38  & 4.47    & 13.33                            & 1.81                         & 5.43     \\
\ding{51}
    & 0.31  & 1.28    & 4.39                             & 1.17                         & 2.63    \\ \bottomrule
\end{tabular}
\label{tab:calibration}
\end{table}

\section{All layers instead of penultimate layer}

In the proposed method, we utilize the activation values at the penultimate layer of the neural network. Here, we examine the impact of using all intermediate layers of neural networks rather than just one. To this end, we repeat the experiment from section 4.1 using $\alpha=1000$ to consider all the layers. The same algorithm that was applied to the penultimate layer is now applied to all layers and the outputs (normalized by the number of nodes in their respective layers) are summed together. 
Fig. \ref{fig:method_with_all_layers} displays the comparison between only one or all layers, for eICU and Diabetics datasets.

According to this figure, both settings are effective in improving the OOD detection performance. However, the performance of all the detection methods is better with only one layer with the eICU dataset, while with the Diabetics dataset, many baselines get better results when all the layers are employed. 
Accordingly, while both setups are effective, the best option depends on the dataset and detection method. Still, note that the average performance on these two datasets is better using only the penultimate layer. 

\begin{figure}[t]
    \centering
        \includegraphics[scale=0.8]{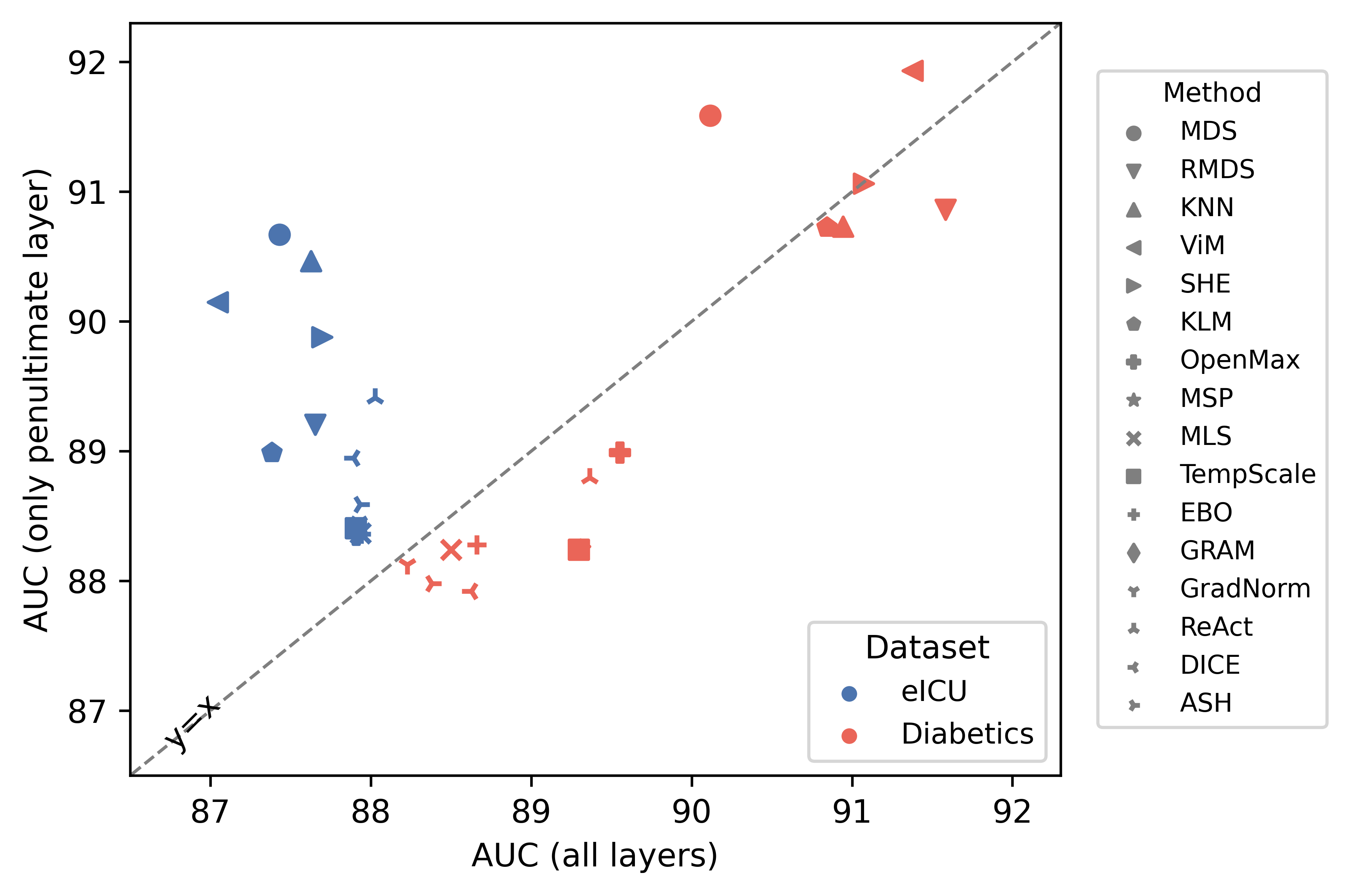}
        \caption{OOD detection performance with capturing extreme values in only the penultimate layer (y-axis) and in all the intermediate layers (x-axis). The eICU and Diabetics datasets serve as ID, and the OOD set is generated using $\alpha=1000$.}
    \label{fig:method_with_all_layers}
\end{figure}

\section{Other norms in CEA}
CEA measures the $\ell_2$ norm of activations exceeding a specified threshold. As stated in the main text, the choice of $\ell_2$ norm can potentially be substituted with other $\ell_p$ norms. Here, we evaluate how the utilization of $\ell_0$ and $\ell_1$ norms influences the outcomes. According to the results in Table \ref{tab:lp_norms}, these norms result in similar results to the $\ell_2$ norm. This means that CEA can be used with other reasonable norms as well. 

\begin{table}[ht]
\centering
\caption{AUC of OOD detection with CEA using $\ell_0$, $\ell_1$, or $\ell_2$ norms to calculate size of extreme activations. Datasets include eICU and Diabetics, OODs are synthesized by $\alpha=1000$, and baseline detection methods are MSP and EBO.}
\begin{tabular}{ccccccccc}
\toprule
\multirow{2}{*}{Method} & &\multicolumn{3}{c}{eICU} & & \multicolumn{3}{c}{Diabetics} \\
                      &  & $\ell_0$     & $\ell_1$     & $\ell_2$    & & $\ell_0$     & $\ell_1$     & $\ell_2$      \\ \midrule
MSP                 & &  88.2 &     88.3   &   88.4       &      &    88.1      &      88.2    &    88.2     \\
EBO                &  &  88.4 &   88.4     &   88.4     &        &       88.2   &    88.2      &    88.3    \\ \bottomrule
\end{tabular}
\label{tab:lp_norms}
\end{table}

\end{document}